\documentclass{article} 
\usepackage{iclr2026_conference,times}

\usepackage{amsmath,amsfonts,bm}
\usepackage{tikz}
\usetikzlibrary{arrows.meta,positioning,fit,calc,shapes,shapes.multipart}

\newcommand{\1}[1]{\mathbb{I}\{#1\}}     
\newcommand{\Beta}{\mathrm{Beta}}        
\newcommand{\Dir}{\mathrm{Dir}}          

\def\eqref#1{equation~\ref{#1}}
\def\Eqref#1{Equation~\ref{#1}}

\def\1{\bm{1}}

\DeclareMathAlphabet{\mathsfit}{\encodingdefault}{\sfdefault}{m}{sl}
\SetMathAlphabet{\mathsfit}{bold}{\encodingdefault}{\sfdefault}{bx}{n}

\newcommand{\E}{\mathbb{E}}

\newcommand{\KL}{D_{\mathrm{KL}}}
\newcommand{\Var}{\mathrm{Var}}

\usepackage{hyperref}
\usepackage{url}

\usepackage{tikz}
\usetikzlibrary{arrows.meta, positioning}
\usepackage{booktabs}
\usepackage{algorithm}
\usepackage{algpseudocode}
\usepackage{enumitem}
\usepackage{amsthm}
\usepackage[table]{xcolor} 
\usepackage{amsmath}
\usepackage{amssymb}
\usepackage{multirow}
\usepackage{subcaption} 
\usepackage{caption}
\usepackage{wrapfig} 
\title{DirMoE: Dirichlet-Routed Mixture of Experts}

\author{
\textbf{Amirhossein Vahidi}$^{1,*}$ \hfill
\textbf{Hesam Asadollahzadeh}$^{1,2,*}$ \hfill
Navid Akhavan Attar$^{2}$ \\
\textbf{Marie Moullet}$^{1}$ \hfill
\textbf{Kevin Ly}$^{1}$ \hfill
\textbf{Xingyi Yang}$^{3}$ \hfill
\textbf{Mohammad Lotfollahi}$^{1}$ \\[2pt]
$^{1}$Wellcome Sanger Institute, Wellcome Genome Campus, Cambridge, UK \\
$^{2}$School of Computing and Information Systems (CIS), Faculty of Engineering and IT (FEIT),\\
\>\>University of Melbourne, Australia \\
$^{3}$The Hong Kong Polytechnic University, Hong Kong \\
\texttt{\{ha11,av13,ml19\}@sanger.ac.uk} \\
$^{*}$Equal contribution
}

\newtheorem{theorem}{Theorem}
\newtheorem{lemma}{Lemma}
\newtheorem{corollary}{Corollary}

\renewcommand{\Dir}{\mathrm{Dir}}
\newcommand{\Exp}{\mathbb{E}}

\newcommand{\name}{\textbf{DirMoE}}

\newcommand{\RR}{\mathbb{R}}

\iclrfinalcopy 
\begin{document}

\maketitle

\begin{abstract}

Mixture-of-Experts (MoE) models have demonstrated exceptional performance in large-scale language models. Existing routers typically rely on non-differentiable Top-$k$+Softmax, limiting their performance and scalability. We argue that two distinct decisions, which experts to activate and how to distribute expert contributions among them, are conflated in standard Top-$k$+Softmax. We introduce Dirichlet-Routed MoE (DirMoE), a novel end-to-end differentiable routing mechanism built on a Dirichlet variational autoencoder framework. This design fundamentally disentangles the core routing problems: expert selection, modeled by a Bernoulli component, and expert contribution among chosen experts, handled by a Dirichlet component. The entire forward pass remains fully differentiable through the use of Gumbel-Sigmoid relaxation for the expert selection and implicit reparameterization for the Dirichlet distribution. Our training objective, a variational ELBO, includes a direct sparsity penalty that precisely controls the number of active experts in expectation, alongside a schedule for key hyperparameters that guides the model from an exploratory to a definitive routing state. Moreover, our DirMoE router matches or exceeds other methods while improving expert specialization.
\end{abstract}

\section{Introduction}
\label{sec:intro}
Modern sparse Mixture-of-Experts (MoE) layers scale capacity without proportional computation by routing each token to a small subset of experts \citep{shazeer2017moe, mixtral2024, deepseek2024v2,du2021glam}. The main component of MoE is the router.  The router is responsible for answering two questions: (i) which experts participate and (ii) their degree of contribution, determined by the probability (mass) that is distributed among them. Furthermore, the learning process should be sparse, with only a fraction of active experts, in order to decrease computational complexity.
One of the most prominent approaches to address this is using Top-$k$+Softmax. A persistent challenge in standard Top-$k$+Softmax routing \citep{lepikhin2020gshard,fedus2021switch} is the lack of end-to-end gradients through the discrete selection step and its requirement for a secondary objective (e.g., temperature tuning, auxiliary losses, straight-through estimators) to encourage sparsity during training, which complicates stability and calibration. Beyond optimization, the single Softmax entangles two decisions: expert selection and how much their contribution is, making expert usage hard to interpret. This entanglement ties load balancing to mixture calibration via a single temperature, relying on capacity constraints to curb overload, which causes an uneven distribution across experts. These limitations motivate a routing mechanism that (i) preserves full differentiability and (ii) offers explicit, interpretable control over expert selection and probability allocation on the simplex.


 Assignment-based routers (BASE Layers) reframe dispatch as an assignment problem to guarantee balanced loads \citep{lewis2021base}; this addresses balance but not mixture calibration or interpretability of how much mass the active set receives. Most recently, ReMoE replaces Top-$k$+Softmax with a fully continuous gate to restore end-to-end differentiability and improve scalability \citep{wang2025remoe, puigcerver2024softmoe}. While this directly tackles the gradient bottleneck, the resulting "soft" routing needs an auxiliary loss to enforce sparsity. In practice, these can inject interference gradients, complicate tuning, and dampen expert specialization~\citep{Wang2024LossFreeBalancing}. Moreover, when they are combined with capacity-based routing (token dropping), they may introduce instability ~\citep{rajbhandari2022deepspeedmoe, Gale2023MegaBlocks}. 

We introduce a \textbf{Dirichlet variational router} that factorizes routing into (i) a binary selection vector over experts (the set of active experts) and (ii) a Dirichlet distribution over mixture weights on the simplex (the distribution among the active experts). As shown in Figure~\ref{fig:method}, the router produces a relaxed expert selection vector via a Gumbel–Sigmoid reparameterization and the expert contribution via a Dirichlet; the final routing weights are the normalized Hadamard product of the two components (see Section \ref{sec:method}). Furthermore, the router is fully differentiable with implicit reparameterization \citep{maddison2017concrete,jang2017gumbel,figurnov2018implicit}. A key benefit is calibration. Under a Dirichlet prior, the total mass assigned to any subset of experts follows a Beta distribution. We exploit this to derive a one-parameter (sparsity knob) control that sets the expected mass on the active set, decoupled from the size of active experts ($k$). Thus, $k$ cleanly determines how many experts participate, while a Beta-calibrated ratio of Dirichlet shape parameters determines how much they collectively contribute. This design retains the end-to-end gradient advantage and adds a transparent sparsity knob to calibrate \emph{expert selection} and \emph{expert contribution}.

\begin{figure}[!t]
\centering
    \includegraphics[width=0.99\textwidth]{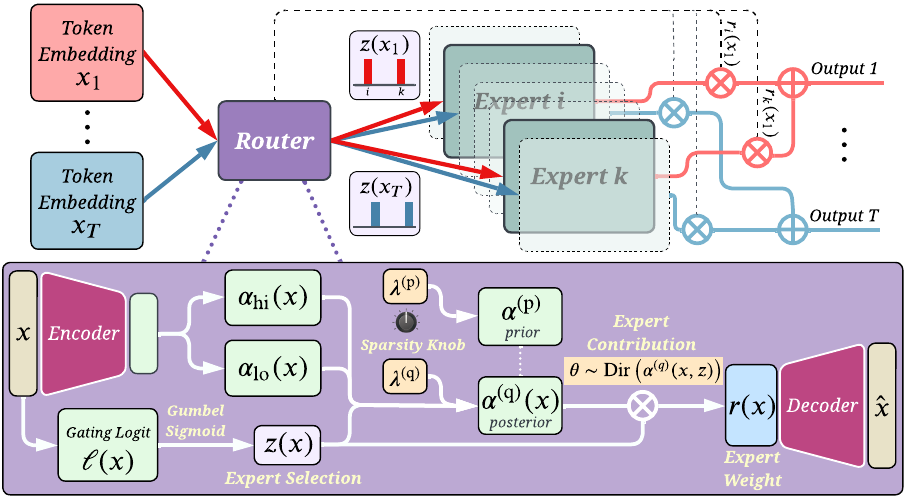}
    \vspace{-5pt}
    \caption{Illustration of \name . Given a batch $\mathcal{X}$ of input embeddings, two different heads $\alpha_{\mathrm{hi}}(x),\alpha_{\mathrm{lo}}(x)$ learns the active and inactive per-token expert concentration, and $\ell(x)$ learns the gating logits. The routing probabilities are the normalized product of $z$ expert selection and Dirichlet probabilities $\theta$ (expert contribution).}\label{fig:method}
\end{figure}

\paragraph{Our contributions are:} 
\begin{itemize}
    \item We introduce a novel probabilistic router to improve sparsity in MoE. \name\ is the first method using a Dirichlet variational autoencoder producing sparse, interpretable routes on the simplex  for MoE.
    \item We use Gumbel-sigmoid for expert selection and implicit reparameterization for Dirichlet samples to enable end-to-end gradients. This design maintains the MoE's full differentiability.
    \item We introduce a coefficient (sparsity knob) that can control the sparsity of the router, providing a practical calibration knob for specialization and sparsity.
    \item We demonstrate the strong performance of our method across multiple datasets and tasks: \name\ shows competitive zero-shot results and improves expert specialization. 
\end{itemize}


\section{Related Work}
\label{sec:related}
\paragraph{Mixture-of-Experts and Routing} GShard and Switch established practical routing via auxiliary balancing and capacity control~\citep{lepikhin2020gshard,fedus2021switch}; GLaM demonstrated trillion-parameter efficiency~\citep{du2021glam}. To avoid discrete Top-$k$, recent differentiable routers restore end-to-end gradients: Soft-MoE replaces hard selection with soft assignments~\citep{puigcerver2024softmoe}; ReMoE uses ReLU routing with explicit sparsity control and balancing~\citep{wang2025remoe}; and Lory scales fully differentiable MoE to pretraining via segment routing and similarity-based batching~\citep{zhong2024lory}. To address the limitation of independent routing at each layer, Layerwise Recurrent Router (RMoE) introduces a GRU-based mechanism to propagate routing states, enabling cross-layer dependencies for improved expert selection~\citep{qiu2025layerwise}.

\paragraph{Sparsity in MoE}
A central goal in MoE is to achieve controllable sparsity between different experts. Controllable sparsity has evolved from temperature and capacity heuristics and balancing losses~\citep{fedus2021switch,zoph2022stmoe} toward learned or auto-tuned activation budgets. Variable-$k$ routing allows each token to choose its expert count: DynMoE jointly learns per-token $k$ and the global number of experts~\citep{guo2025dynmoe}, while DA-MoE derives token importance from attention to allocate a variable number of experts~\citep{akhavanaghdam2024damoE}. In contrast, dynamic depth allocates compute across layers rather than experts: Mixture-of-Depths (MoD) learns to skip or process tokens under a compute budget~\citep{raposo2024mod}, A-MoD routes using attention maps without extra parameters~\citep{gadhikar2024amod}, and Router-Tuning (MindSkip) enables dynamic depth by training only a lightweight router~\citep{he2024routertuning}.

\section{Preliminary}
\label{sec:pre}

\subsection{Spike and Slab Routing on the Simplex}
\label{subsec:prelim-ss}

We model MoE routing with a spike and slab prior~\citep{mitchell1988bayesian,ishwaran2005spike} that separates which experts are active (expert selection) from how probability mass is distributed among the active experts (expert contribution).
Let $E$ be the total number of experts, $\mathbf z\in\{0,1\}^{E}$ be a binary selection mask and $\boldsymbol\theta\in\Delta^{E-1}$ a simplex vector.
We factorize
\begin{equation}
p(\mathbf z,\boldsymbol\theta\mid x)
\;=\;
\underbrace{\prod_{i=1}^E \mathrm{Bernoulli}\!\left(z_i\mid \pi_i(x)\right)}_{\text{spike: Expert selection}}
\;\times\;
\underbrace{\Dir\!\big(\boldsymbol\theta\,\big|\,\boldsymbol\alpha^{(p)}(\mathbf z)\big)}_{\text{slab: Expert contribution}},
\label{eq:ss-prior-upd}
\end{equation}
where $\pi_i(x)\in(0,1)$ are gate priors (optionally with Beta hyperpriors) and $x$ is the token embedding input to the MoE layer. The slab uses a two-level concentration:
\begin{equation}
\alpha^{(p)}_i(\mathbf z)
\;=\;
\lambda\Big(z_i\,\alpha_{\mathrm{hi}}+(1-z_i)\,\alpha_{\mathrm{lo}}\Big),
\qquad
\alpha_{\mathrm{hi}}>\alpha_{\mathrm{lo}}>0,\ \ \lambda>0.
\label{eq:double-spike-dir}
\end{equation}
Where $\alpha_{\mathrm{hi}}$ is the per-expert concentration when an expert is active ($ z_i\!\approx\!1$), $\alpha_{\mathrm{lo}}$ is the per-expert concentration when inactive ($z_i\!\approx\!0$); lower number reduces leakage into inactive experts. $\lambda$ is a global scale of the concentration of active and inactive experts, it controls dispersion in the Dirichlet parameter. A larger $\lambda$ tightens samples around the mean so the distribution becomes more uniform. A smaller $\lambda$ yields higher variance and more mass near simplex vertices, so the distribution is more spiky (effectively sparser draws). 

\subsection{Simpson Index as a Sparsity Metric}
\label{subsec:simpson}
 
In the context of probability distributions, the Simpson Index is defined as the sum of the squared probabilities: $H(\mathbf{p}) = \sum_{i=1}^{E} p_i^2$. This metric provides an intuitive measure of a distribution's concentration: a perfectly sparse distribution with all probability mass on one outcome provides an upper bound of $H=1$, while a uniform distribution across all $E$ outcomes yields a lower bound of $H=1/E$. For any other distribution, the value lies between these two bounds. We use the Simpson index as a direct and interpretable measure of sparsity for our routing distributions, where a higher value indicates greater sparsity.

\section{Method}
\label{sec:method}
In the following, we introduce \name, a fully differentiable MoE router that factors routing into two components: expert selection and expert contribution, and how they define the final expert weights. Then we introduce a new parameter $\lambda,$ the sparsity knob that can control sparsity. 
\subsection{Problem Setup}
We consider a transformer decoder with sparse MoE layers. Given a token embedding $x\in\RR^d$, let $\{E_i\}_{i=1}^E$ denote expert networks and let a router produce logits $\ell(x) \in \RR^E$.
We seek a routing vector $\mathbf r(x)\in\Delta^{E-1}$ that satisfies:
(i) only $k\ll E$ experts are active per token; (ii) intrinsic activation sparsity for interpretability; and (iii) full differentiability so the router can learn task-dependent sparsity.

\subsection{Differentiable router}\label{method:router}
The router has two components: (i) expert selection generated from logits $\ell(x)\in\mathbb{R}^E$;
 combined below. The selection indicator is trained via binary Gumbel-Sigmoid \cite{jang2017gumbel,maddison2017concrete}. Given router logits  $\ell(x)$, we sample an expert selection vector:
\begin{equation}\label{eq:concrete}
\tilde z_i=\sigma\!\left(\frac{\ell_i(x)+g_i}{\tau_z}\right), 
\quad g_i\sim\mathrm{Logistic}(0,1),
\end{equation}
where $\sigma$ is the sigmoid function, $\tau_z>0$ is the temperature. This yields a soft mask $\tilde z\in(0,1)^E$; as $\tau_z\downarrow 0$, $\tilde z$ becomes nearly binary.

(ii) Conditioned on $\tilde{\mathbf z}$, we define a Dirichlet variational posterior over a latent simplex variable $\boldsymbol\theta$,
\[
q_\phi(\boldsymbol\theta \mid x,\tilde{\mathbf z}) \;=\; \Dir\!\big(\boldsymbol\alpha^{(q)}(x,\tilde{\mathbf z})\big),
\]
with component-wise parameters
\begin{equation}
\alpha^{(q)}_i(x,\tilde{\mathbf z})
\;=\;
\lambda^{(q)}\Big(\tilde z_i\,\alpha_{\mathrm{hi},i}(x) + (1-\tilde z_i)\,\alpha_{\mathrm{lo},i}(x)\Big),
\label{eq:alpha-q}
\end{equation}
where $\lambda^{(q)}>0$ scales the total concentration and $\alpha_{\mathrm{hi},i}(x),\alpha_{\mathrm{lo},i}(x)>0$ may be learned functions of $x$ (or shared across $i$) and the Dirichlet is sampled via normalized Gamma with implicit reparameterization~\citep{figurnov2018implicit}.

For the prior, we use the same relaxed gates to avoid mixing discrete and continuous objects and to keep a closed-form KL:
\begin{equation}
p(\boldsymbol\theta \mid \tilde{\mathbf z})
\;=\;
\Dir\!\big(\boldsymbol\alpha^{(p)}(\tilde{\mathbf z})\big),
\qquad
\alpha^{(p)}_i(\tilde{\mathbf z})
=
\lambda^{(p)}\Big(\tilde z_i\,\alpha_{\mathrm{hi}} + (1-\tilde z_i)\,\alpha_{\mathrm{lo}}\Big),
\label{eq:alpha-p}
\end{equation}
with constants $\alpha_{\mathrm{hi}}>\alpha_{\mathrm{lo}}>0$ and scale $\lambda^{(p)}>0$.
This is a smooth surrogate to a spike and slab on faces of the simplex: as $\tilde z_i \to 0$, coordinate $i$ is driven toward (near-)zero probability; as $\tilde z_i \to 1$, it joins the active slab.

We introduce the router as a normalized combination of two vectors (i) the expert selection, and (ii) the expert contribution $\theta$ as follows:
\begin{equation}
\boxed{\;
\mathbf r(x)\;=\;\mathrm{normalize}\!\big(\tilde{\mathbf z}(x)\odot \boldsymbol\theta(x)\big)\in\Delta^{E-1},
\quad \boldsymbol\theta \sim \Dir\!\big(\boldsymbol\alpha^{(q)}(x,\tilde{\mathbf z})\big)
\;}
\label{eq:wsoft}
\end{equation}
 which gates the experts' outputs. Our goal is to have $\|\tilde z(x)\|_0\approx k$ and to place almost all of the total probability mass of
$\alpha^{(q)}$ on the active set $S(x)=\{i:z_i(x)=1\}$. Then the MoE output is :
\begin{equation}
y(x)\;=\;\sum_{i=1}^{E} r_i(x)\,E_i(x).
\label{eq:moe_soft}
\end{equation}

This design maintains full differentiability throughout the entire forward pass: gradients flow through the Binary-Concrete relaxation~\citep{maddison2017concrete,jang2017gumbel} and through the Dirichlet distribution via implicit reparameterization~\citep{figurnov2018implicit}.

\subsection{Training Objective}
\label{subsec:objective}

We optimize a variational objective per token $x$:
\begin{align}
\mathcal{L}_{\text{\name}}(x)
&=\underbrace{-\,\E_{q_\phi(\tilde{\mathbf z}\mid x)}\!\big[\log p_\psi\big(x \mid \mathbf r(x)\big)\big]}_{\textbf{reconstruction on }\mathbf r(x)}
\;+\;
\beta_\theta\,\underbrace{\E_{q_\phi(\tilde{\mathbf z}\mid x)}\!\Big[\KL\!\big(\Dir(\boldsymbol\alpha^{(q)}(x,\tilde{\mathbf z}))\,\|\,\Dir(\boldsymbol\alpha^{(p)}(\tilde{\mathbf z}_{\mathrm{sg}}))\big)\Big]}_{\textbf{Dirichlet KL}}
\notag\\
&\quad+\;\underbrace{\mathcal{R}_{\text{sparsity}}(x)}_{\textbf{sparsity}}\,.
\label{eq:loss-aux}
\end{align}
where $\beta_\theta\!\ge\!0$ weights the slab KL, $\tilde{\mathbf z}_{\mathrm{sg}}=\mathrm{stopgrad}(\tilde{\mathbf z})$. We optimize a $\beta$-weighted variational objective~\citep{Higgins2017betaVAE, Alemi2017VIB} (Eq. (8)). When $\beta_\theta=1$, the objective reduces to the standard ELBO. 
The decoder is a single network $g_\psi:\Delta^{E-1}\!\to\!\mathbb{R}^d$ applied to the latent $\mathbf r$. The details of the decoder and different loss terms are in Appendix \ref{appendix:objective}. So, we train the model with the following objective function:

\begin{equation}
\mathcal{L}_{\text{total}} = \mathcal{L}_{\text{LM}} + \mathcal{L}_{\text{\name}}
\end{equation}

\paragraph{Sparsity regularization.}
We encourage $k$ active experts in expectation per token using:
\begin{align}
\mathcal{R}_{\text{sparsity}}(x)
&= \lambda_{\text{sparsity}}\Big(\textstyle\sum_{i=1}^{E} \tilde z_i(x) - k\Big)^2,
\label{eq:sparsity-penalty}
\end{align}

\begin{wrapfigure}{r}{0.45\textwidth}  
  \vspace{-\intextsep}                 
  \centering
  \includegraphics[width=\linewidth]{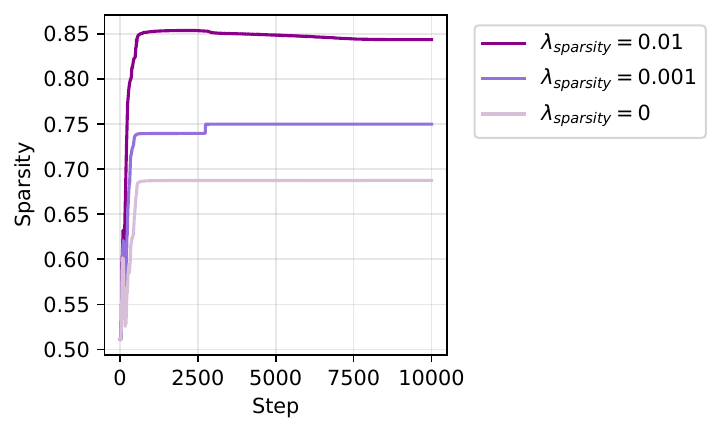}
  \caption{Effect of sparsity regularization on the sparsity, there is less sparsity compared to the desired sparsity with lower $\lambda_{sparsity}$. }
  \label{fig:rightwrap}
\end{wrapfigure}
where $\lambda_{\text{sparsity}}\!\ge\!0$ controls sparsity strength. You can find the connection to the KL of the spike term in \ref{appendix:kl-rsparsity}. Figure \ref{fig:rightwrap} shows the importance of Regularization loss on the desired sparsity ($1-k/E)$. Further experimental results on the importance of $R_{sparsity}$ in Appendix \ref{app:lmbda_spa}. 

\subsection{Scheduler}
\label{subsec:schedules}
We discussed in Sec.~\ref{method:router} that $\boldsymbol\theta \sim \Dir(\boldsymbol\alpha)$ with two concentration levels: $\alpha_{\mathrm{hi}}$ for active experts and $\alpha_{\mathrm{lo}}$ for inactive ones, scaled by $\lambda^{(p)}>0$.
Let $S=\{i:z_i=1\}$ with $|S|=k$ denote the (idealized) active set.\footnote{We use this discrete view to calibrate the prior; in practice we train with relaxed gates $\tilde{\mathbf z}$, cf.\ Sec.~\ref{method:router}.}
Under a Dirichlet, the sum over any subset is Beta-distributed; therefore
\begin{equation}\label{eq:beta_mass}
  T \;:=\; \sum_{i\in S}\theta_i
  \;\sim\; \Beta\!\big(k\lambda^{(p)}\alpha_{\mathrm{hi}},\ (E-k)\lambda^{(p)}\alpha_{\mathrm{lo}}\big),
  \qquad
  \E[T]\;=\;\frac{k\alpha_{\mathrm{hi}}}{k\alpha_{\mathrm{hi}}+(E-k)\alpha_{\mathrm{lo}}}
  \;=: \; m.
\end{equation}
Solving for the ratio $r:=\alpha_{\mathrm{hi}}/\alpha_{\mathrm{lo}}$ yields
\begin{equation}\label{eq:ratio_r}
  r \;=\; \frac{m}{1-m}\cdot \frac{E-k}{k}.
\end{equation}
Hence, for a desired expected probability mass fraction $m$ on the $k$ active experts, one can fix $r$ by \eqref{eq:ratio_r} and then schedule the absolute scales separately.%
\footnote{Note that $\E[T]$ is independent of $\lambda^{(p)}$; changing $\lambda^{(p)}$ alters dispersion (and thus sparsity tendencies) but not the mean probability mass split between active and inactive sets when $r$ is fixed.}

\paragraph{Temperature schedule.}
We use a short schedule for the gate temperature to move from exploratory to decisive selection variables:
\begin{equation}\label{eq:tau_sched}
\tau_z^{(t)} \;=\; \max\!\big(\tau_{\min},\ \tau_0\,\rho^{\,t}\big).
\end{equation}
Large $\tau_z$ early encourages exploration; small $\tau_z$ later yields near-binary $\tilde{\mathbf z}$. 

\paragraph{Dirichlet-parameter schedule.}
We choose $r=\alpha_{\mathrm{hi}}/\alpha_{\mathrm{lo}}$ from \eqref{eq:ratio_r} and keep it fixed to preserve the target mean probability mass $m$ on the active set. We then reduce the spike floor and tighten overall concentration during training:
\begin{equation}\label{eq:dir_sched}
\alpha_{\mathrm{lo}}^{(t)} \;=\; \max\!\big(\alpha_{\mathrm{floor}},\ \alpha_{\mathrm{lo}}^{(0)}\,\gamma^{\,t}\big),
\qquad
\alpha_{\mathrm{hi}}^{(t)} \;=\; r\,\alpha_{\mathrm{lo}}^{(t)},
\qquad
\lambda^{(p, t)} \;=\; \lambda^{(p, 0)}\,\eta^{\,t},
\end{equation}

with decay factors $\gamma,\eta\in(0,1)$ and a small numerical floor $\alpha_{\mathrm{floor}}$ for stability in digamma($\Gamma$) evaluations. 

\section{Discussion}

\subsection{Control of sparsity}
\label{subsec:sparsity}
We control sparsity with two orthogonal knobs: the mask temperature $\tau_z$ (exploration $\rightarrow$ decisiveness) and the Dirichlet concentration scale $\lambda$ (dispersion on the active simplex face). Early exploration reduces rich-get-richer collapse without extra load-balancing losses; later, a small $\tau_z$ yields a near-binary expert selection vector and better specialization. 
\subsection{Calibrated Sparsity via the Dirichlet Concentration}
\label{sec:calibration}
\begin{figure}[t]
    \centering
    \begin{subfigure}[h]{0.35\textwidth}
        \includegraphics[width=\textwidth]{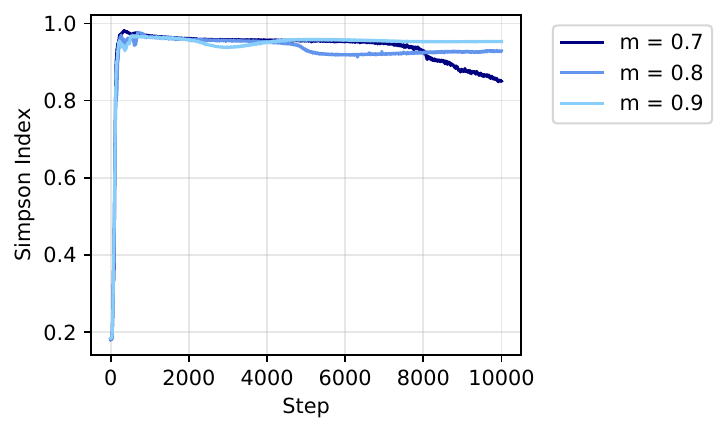}
        \caption{}
        \label{fig:lambda_simp}
    \end{subfigure}
    \begin{subfigure}[h]{0.35\textwidth}
        \includegraphics[width=\textwidth]{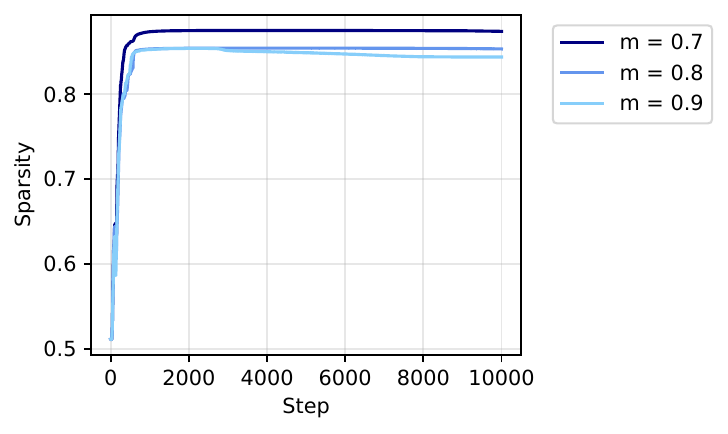}
        \caption{}
        \label{fig:lambda_spars}
    \end{subfigure}
    \caption{ Effect of $m$ and $\lambda$ on (a) Simpson index and (b) sparsity.}  $\lambda$ is calculated from $m$ based on \eqref{eq:lambda_from_beta}
    \label{fig:lmbda_tau}
\end{figure} 
We introduce the Dirichlet concentration $\lambda$ as a \textbf{calibrated sparsity knob} for the routing distribution in our MoE model. We use the \textbf{Simpson index} $H(\mathbf p) = \sum_i p_i^2$ as our sparsity metric; a higher value of $H(\mathbf p)$ indicates a more concentrated (and thus sparser) distribution. You can find the role of $\lambda$ on the variance of the Dirichlet distribution (expert contribution) in Appendix \ref{var_lmbda}.

\subsection{Theoretical Justification}
\label{subsec:theory}

We formally justify using $\lambda$ as a sparsity knob through a key theorem. The core idea is that the expected sparsity, as measured by the Simpson index, is a monotonic function of the Dirichlet concentration. We justify using the Dirichlet concentration $\lambda$ as a calibrated sparsity knob via the Simpson concentration $H(\mathbf p)=\sum_{i=1}^E p_i^2$.

\begin{lemma}[Expected Simpson index under Dirichlet]
\label{lem:dirichlet_simpson}
Let $\mathbf p \sim \Dir(\lambda\,\boldsymbol{\beta})$ with $\boldsymbol{\beta}\in\mathbb{R}_{>0}^E$, $B=\sum_{i=1}^E \beta_i$, and $S_2=\sum_{i=1}^E \beta_i^2$. Then
\[
\Exp\!\left[H(\mathbf p)\right] \;=\; \frac{\lambda S_2/B + 1}{\lambda B + 1}.
\]\label{eq:simpson_expect}
\end{lemma}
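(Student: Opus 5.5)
By linearity of expectation, $\Exp[H(\mathbf p)]=\sum_{i=1}^E \Exp[p_i^2]$. So it suffices to know the second moment of each coordinate of a Dirichlet vector. I would obtain it from the marginal fact already used in \eqref{eq:beta_mass}: the sum of Dirichlet coordinates over any subset is Beta-distributed. Taking the singleton subset $\{i\}$ gives
\[
p_i\sim\Beta\big(\lambda\beta_i,\ \lambda(B-\beta_i)\big).
\]
Write $a_i=\lambda\beta_i$ and $A=\lambda B$. The standard Beta second moment (equivalently, mean squared plus variance) is
\[
\Exp[p_i^2]=\frac{a_i(a_i+1)}{A(A+1)}.
\]
This step is routine. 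It can also be checked directly from the Gamma-ratio form of the Dirichlet moments, $\Exp[p_i^2]=\Gamma(A)\Gamma(a_i+2)/\big(\Gamma(a_i)\Gamma(A+2)\big)$.

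Next I would sum over $i$. Using $\sum_i a_i=\lambda B$ and $\sum_i a_i^2=\lambda^2 S_2$, the sum is
\[
\sum_i \Exp[p_i^2]=\frac{\lambda^2 S_2+\lambda B}{\lambda B(\lambda B+1)}.
\]
Dividing numerator and denominator by $\lambda B$ gives $\big(\lambda S_2/B+1\big)/\big(\lambda B+1\big)$, which is the claimed formula.

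The only step that needs care is the moment formula itself. Specifically, one has to check that the normalizer $A(A+1)$ uses the total concentration $\lambda B$, not $\lambda$ alone. I would confirm this in the $E$-coordinate Dirichlet by integrating against the Dirichlet density, using $\Gamma(x+2)=x(x+1)\Gamma(x)$. As a sanity check, the result gives $1/E$ in the limit $\lambda\to\infty$ with uniform $\boldsymbol\beta$, and $1$ in the limit $\lambda\to 0$. Both agree with the bounds on $H$ stated in Section~\ref{subsec:simpson}.
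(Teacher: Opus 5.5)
Your proposal is correct and matches the paper's proof: linearity of expectation, the Dirichlet second moment $\Exp[p_i^2]=\alpha_i(\alpha_i+1)/(\alpha_0(\alpha_0+1))$ with $\alpha_0=\lambda B$ the total concentration, then substituting $\alpha_i=\lambda\beta_i$ and dividing numerator and denominator by $\lambda B$. The only difference is that you justify the moment formula via the singleton Beta marginal and a Gamma-ratio check, whereas the paper simply quotes it.
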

Proof is in Appendix \ref{app:proof_lemma}.

\begin{theorem}[Monotone sparsity control by concentration]
\label{thm:monotone_lambda}
Fix $\boldsymbol{\beta}\in\mathbb{R}_{>0}^E$ with $E\ge 2$ and let $\mathbf p\sim \Dir(\lambda\,\boldsymbol{\beta})$. Then
$f(\lambda):=\Exp[H(\mathbf p)]$ is strictly decreasing in $\lambda>0$, with limits
\[
\lim_{\lambda\to 0^+} f(\lambda)=1
\qquad\text{and}\qquad
\lim_{\lambda\to\infty} f(\lambda)=\sum_{i=1}^E m_i^2,
\]
\end{theorem}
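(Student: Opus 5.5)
The plan is to reduce everything to the closed form in Lemma~\ref{lem:dirichlet_simpson} and then do one-variable calculus on a linear-fractional function of $\lambda$. Throughout I read $m_i$ in the statement as the normalized base measure $m_i=\beta_i/B$, i.e.\ the mean of $p_i$ under $\Dir(\lambda\boldsymbol\beta)$, which does not depend on $\lambda$. By Lemma~\ref{lem:dirichlet_simpson},
\[
f(\lambda)=\frac{\lambda S_2/B+1}{\lambda B+1},\qquad \lambda>0 .
\]
This is a M\"obius map in $\lambda$. Its monotonicity is therefore governed by the sign of a single constant, the determinant of its coefficient matrix.

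First, I would differentiate by the quotient rule. The numerator of $f'(\lambda)$ is
\[
\frac{S_2}{B}(\lambda B+1)-B\Big(\lambda\frac{S_2}{B}+1\Big)=\frac{S_2}{B}-B=\frac{S_2-B^2}{B}.
\]
The $\lambda$-terms cancel, so
\[
f'(\lambda)=\frac{S_2-B^2}{B(\lambda B+1)^2}.
\]
The denominator is strictly positive for $\lambda>0$, so the sign of $f'$ is the sign of $S_2-B^2$ for every $\lambda$.

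The key step, and the only place the hypotheses $E\ge 2$ and $\beta_i>0$ enter, is to show $S_2<B^2$ strictly. Expanding the square gives
\[
B^2=\sum_i\beta_i^2+2\sum_{i<j}\beta_i\beta_j=S_2+2\sum_{i<j}\beta_i\beta_j .
\]
The cross sum is nonempty because $E\ge 2$, and every term is positive because each $\beta_i>0$. Hence $B^2-S_2>0$, so $f'(\lambda)<0$ on $(0,\infty)$ and $f$ is strictly decreasing. If $E=1$ the cross sum is empty and $f\equiv 1$, so this step is exactly where the hypothesis is needed.

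Finally, the limits follow directly from the closed form. As $\lambda\to0^+$, numerator and denominator both tend to $1$, so $f(\lambda)\to1$. As $\lambda\to\infty$, dividing numerator and denominator by $\lambda$ gives
\[
f(\lambda)\to\frac{S_2/B}{B}=\frac{S_2}{B^2}=\sum_i\Big(\frac{\beta_i}{B}\Big)^2=\sum_i m_i^2 .
\]
As a consistency check, $S_2/B^2\in[1/E,1)$, which matches the Simpson-index bounds of Section~\ref{subsec:simpson}. I do not expect any genuine obstacle once the lemma is granted. The only point needing care is stating the identification $m_i=\beta_i/B$ explicitly, since $m_i$ is not defined in the theorem itself.
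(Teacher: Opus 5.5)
Your proposal is correct and takes essentially the same route as the paper's proof: you write $f$ via Lemma~\ref{lem:dirichlet_simpson} as a linear-fractional function of $\lambda$, compute the derivative numerator as the constant $(S_2-B^2)/B$, and use $B^2=S_2+2\sum_{i<j}\beta_i\beta_j$ to get strict negativity. You then read off the limits $1$ and $S_2/B^2=\sum_i m_i^2$ (with $m_i=\beta_i/B$, exactly as the paper defines it just after the statement).
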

where $m_i=\beta_i/B$ is the mean routing distribution. Proof is in Appendix \ref{app:proof_theory}.

\begin{corollary}[Symmetric base]
\label{cor:symmetric}
If $\beta_i\equiv 1$, then $m_i=1/E$ and Theorem~\ref{thm:monotone_lambda} gives $f(\lambda)$ strictly decreasing from $1$ to $1/E$. The calibration in Lemma \ref{lem:dirichlet_simpson} reduces to
$\Exp[H]=\dfrac{\lambda+1}{\lambda E + 1}$, implying
$\lambda=\dfrac{1-h}{hE-1}$ for a target $h\in(1/E,1)$.
\end{corollary}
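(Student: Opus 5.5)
The corollary follows by specializing Lemma~\ref{lem:dirichlet_simpson} and Theorem~\ref{thm:monotone_lambda} to $\boldsymbol\beta=\1$ and then inverting the resulting scalar formula.

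\textbf{Step 1: specialize the base measure.} With $\beta_i\equiv 1$ we get $B=\sum_i\beta_i=E$ and $S_2=\sum_i\beta_i^2=E$. Hence $S_2/B=1$ and $m_i=\beta_i/B=1/E$ for every $i$.

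\textbf{Step 2: monotonicity and limits.} Since $E\ge 2$, Theorem~\ref{thm:monotone_lambda} applies directly. It gives that $f$ is strictly decreasing, with $\lim_{\lambda\to0^+}f(\lambda)=1$ and $\lim_{\lambda\to\infty}f(\lambda)=\sum_i m_i^2=E\cdot E^{-2}=1/E$.

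\textbf{Step 3: closed form.} Substituting $S_2/B=1$ and $B=E$ into Lemma~\ref{lem:dirichlet_simpson} yields $f(\lambda)=(\lambda+1)/(\lambda E+1)$. As a sanity check, this agrees with Step 2: $f(0)=1$, $f(\lambda)\to 1/E$ as $\lambda\to\infty$, and
\[
f'(\lambda)=\frac{(\lambda E+1)-(\lambda+1)E}{(\lambda E+1)^2}=\frac{1-E}{(\lambda E+1)^2}<0 .
\]

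\textbf{Step 4: inversion.} Fix a target $h\in(1/E,1)$ and set $(\lambda+1)/(\lambda E+1)=h$. Clearing denominators gives $\lambda+1=h\lambda E+h$, so $\lambda(hE-1)=1-h$ and $\lambda=(1-h)/(hE-1)$.

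The only point needing care is the admissibility of this $\lambda$. Because $h>1/E$ we have $hE-1>0$, and because $h<1$ we have $1-h>0$, so $\lambda>0$. Strict monotonicity together with the limits $1$ and $1/E$ shows that $f$ maps $(0,\infty)$ bijectively onto $(1/E,1)$. Therefore this $\lambda$ is the unique solution, which is exactly the calibration claimed in the corollary.
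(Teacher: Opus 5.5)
Your proposal is correct and follows the same route the paper intends: the paper gives no separate proof, and the corollary is obtained by substituting $B=S_2=E$ into Lemma~\ref{lem:dirichlet_simpson}, reading the limits $1$ and $1/E$ off Theorem~\ref{thm:monotone_lambda}, and solving $(\lambda+1)/(\lambda E+1)=h$ for $\lambda$. Your check that $\lambda>0$ for $h\in(1/E,1)$, together with the uniqueness argument from strict monotonicity, is a small addition the paper leaves unstated.
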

\subsection{Practical Calibration Sparsity knob}
\label{subsec:practical}

Based on this theoretical foundation, we provide two practical ways to set the Dirichlet concentration to achieve a desired level of sparsity.

\paragraph{(A) Symmetric Base Calibrator.} For a symmetric base where all experts are treated equally ($\beta_i=1$), the expression simplifies, allowing us to directly map a target Simpson index $h \in (1/E, 1)$ to the required concentration $\lambda$. This is useful for initial exploration where no prior knowledge about expert specialization is assumed.
\begin{equation}
\lambda \;=\; \frac{1-h}{hE-1}
\label{eq:lambda_from_h}
\end{equation}

\paragraph{(B) Two-Group (Beta) Calibrator.} For our method, we can model the total probability mass on a group of $s$ experts with high shape parameters ($\alpha_{\text{hi}}$) and $E-s$ experts with low ones ($\alpha_{\text{lo}}$). The total probability mass on the `high' group follows a Beta distribution. We can set $\lambda$ to target a specific variance, $v_{\text{tar}}$, for this mass, offering a more nuanced control.
\begin{equation}
\lambda \;=\; \frac{\frac{m(1-m)}{v_{\text{tar}}}-1}{s\alpha_{\mathrm{hi}}+(E-s)\alpha_{\mathrm{lo}}}
\label{eq:lambda_from_beta}
\end{equation}
This approach is highly useful when we have a fixed number of active experts in mind (e.g., $s=k$) and want to control how concentrated the probability mass is on that group. You can find the derivation for Equation \eqref{eq:lambda_from_beta} in Appendix \ref{app:beta}.

\paragraph{Which $\lambda$ to tune?} 
Routing depends on the posterior draw, so sparsity of $\mathbf r$ is controlled by the posterior scale $\lambda_q$ (in $\boldsymbol\alpha^{(q)}$). The prior scale $\lambda^{(p, t)}$ shapes regularization via the KL. In practice, we fix $\lambda_q$ via the calibrator (to hit a target $h$) and mildly decay $\lambda^{(p, t)}$ to encourage decisiveness. We investigate the $\lambda$ and m that are calculated based on Equation  \eqref{eq:lambda_from_beta} for variance 0.01. As $m$ decreases (less mean mass probabilities on the active set), the $\lambda$ from Equation \eqref{eq:lambda_from_beta} increases to keep the Beta distribution variance fixed; This preserves a similar level of sparsity (Fig.~\ref{fig:lmbda_tau}\subref{fig:lambda_spars}) while adjusting expert contribution on the active experts (reflected by $H(\mathbf \theta)$ in Fig.~\ref{fig:lmbda_tau}\subref{fig:lambda_simp}). 

\section{Implementation}
\label{sec:implme}

\paragraph{Architecture}
Our backbone is LLaMA~\citep{Touvron2023LLaMA} with 185M parameters, configured with 12 layers, RMSNorm~\citep{Zhang2019RMSNorm}, SwiGLU activations~\citep{Shazeer2020GLU}, and rotary position embeddings (RoPE)~\citep{Su2021RoPE} and grouped-query attention (GQA)~\citep{Ainslie2023GQA} with 4 query groups. We replace the vanilla Top-$k$ router with \name, keeping the same expert counts and per-expert dimensions. The training follows Algorithm~\ref{alg:ss-w-router}. Detailed implementation information is in Appendix~\ref{app:imp}.

\begin{algorithm}[H]
\caption{DirMoE algorithm}
\label{alg:ss-w-router}
\begin{algorithmic}[1]
\Require Batch $\{x_j\}_{j=1}^N$, experts $\{E_i\}_{i=1}^E$, router logits head $\ell(x)$, posterior base heads $\hat\alpha_{\mathrm{hi}}(x),\hat\alpha_{\mathrm{lo}}(x)$
\Require Scales: posterior $\lambda^{(q)}$, prior $\lambda^{(p,t)}$; temperature $\tau_z^{(t)}$; target $k$
\State Update schedules: $\tau \!\leftarrow\! \tau_z^{(t)}$, $\ \lambda^{(p)} \!\leftarrow\! \lambda^{(p,t)}$ \hfill (Eqs.~\ref{eq:tau_sched}, \ref{eq:dir_sched})
\For{each token embedding $x$}
  \State \textbf{Relaxed gates} $\tilde{\mathbf z}$ via Gumble-sigmoid \hfill (\Eqref{eq:concrete})
  \State \textbf{Posterior slab} $\boldsymbol\alpha^{(q)}(x,\tilde{\mathbf z})$ \hfill (\Eqref{eq:alpha-q})
  \State \textbf{Prior slab} $\boldsymbol\alpha^{(p)}(\tilde{\mathbf z}_{\mathrm{sg}})$ \hfill (\Eqref{eq:alpha-p})
  \State Sample $\boldsymbol\theta \sim \Dir(\boldsymbol\alpha^{(q)})$ \hfill (implicit reparam.)
  \State \textbf{Router} $\mathbf r=\mathrm{normalize}(\tilde{\mathbf z}\odot\boldsymbol\theta)\in\Delta^{E-1}$ \hfill (\Eqref{eq:wsoft})
  \State MoE output $y=\sum_{i=1}^E r_i\,E_i(x)$ \hfill (\Eqref{eq:moe_soft})
\EndFor
\State \textbf{Objective:} optimize the loss in \eqref{eq:loss-aux}.
\end{algorithmic}
\end{algorithm}

\paragraph{Training settings}
We train the model on The Pile dataset~\citep{Gao2021Pile}. We use Byte-Pair Encoding (BPE) tokenization~\citep{Sennrich2016BPE} and train on \(\sim\)30B tokens. The zero-shot experiments use 60k steps (covering all 30B tokens), and ablation experiments run for 10k steps. Our implementation builds on Megatron-LM~\citep{Shoeybi2019MegatronLM}. Megatron’s MoE stack supports expert parallelism with all-to-all token dispatch and grouped-GEMM expert compute for efficiency~\citep{MegatronCoreMoE2025, NVIDIA2024GroupedGEMM, rajbhandari2022deepspeedmoe, Gale2023MegaBlocks}. We use the AdamW optimizer~\citep{Loshchilov2019AdamW} with a learning-rate schedule combining cosine annealing~\citep{Loshchilov2017SGDR} and linear warmup~\citep{Goyal2017LargeBatch}. All experiments use 4-8 NVIDIA H100 GPUs. Hyperparameters are listed in Appendix~\ref{app:hyper}.

\section{Experiments}
\label{sec:expe}

\subsection{scalability}
In this section, we compare the scalability of the \name\ with Vanilla MoE. In Table \ref{tab:efficiency_zero_shot}, we show the computation cost in terms of throughput and iteration time; \name\ has matching or exceeding the vanilla MoE baseline under compute parity. Because both systems utilize all-to-all token dispatch and grouped-GEMM expert compute, the observed efficiency is primarily determined by the global tokens-per-step and sequence length; our method introduces no additional bottlenecks (less than 1 \%). Figure \ref{fig:scale} demonstrates that our method has reliable training and achieves the desired sparsity for different $k$ and the number of experts. You can find further results on scalability in Appendix \ref{app:scale}.

\begin{table*}[ht]
\caption{\small Training efficiency on LLaMA-185M with $E{=}8$, $k{=}1$. }
\label{tab:efficiency_zero_shot}
\centering
\scalebox{.88}{
\begin{tabular}{l c c}
\toprule
Method & Iteration time (ms)$\downarrow$ & Throughput (TFLOP/s/GPU)$\uparrow$  \\
\midrule
Vanilla MoE (Switch) & 431.5 & 138.2  \\
\rowcolor{cyan!10} DirMoE (ours) & 437.3 & 137.2  \\
\bottomrule
\end{tabular}
}
\end{table*}
\begin{figure}[t]
    \centering
    \begin{subfigure}[h]{0.32\textwidth}
        \includegraphics[width=\textwidth]{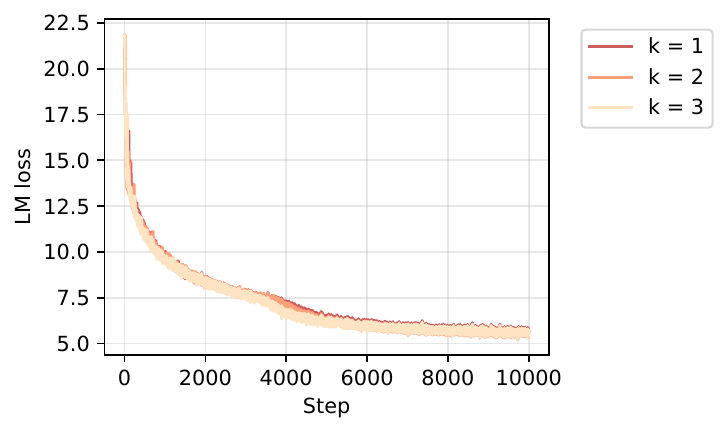}
        \caption{}
        \label{fig:scale:k_loss}
    \end{subfigure}
    \begin{subfigure}[h]{0.32\textwidth}
        \includegraphics[width=\textwidth]{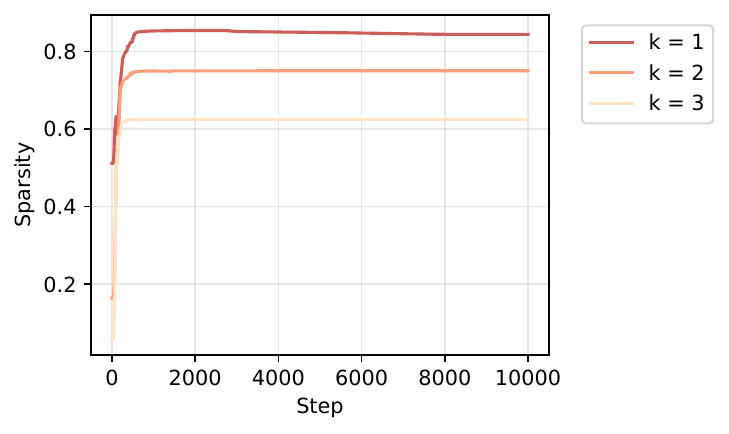}
        \caption{}
        \label{fig:scale:k_spar}
    \end{subfigure}
    \begin{subfigure}[h]{0.32\textwidth}
        \includegraphics[width=\textwidth]{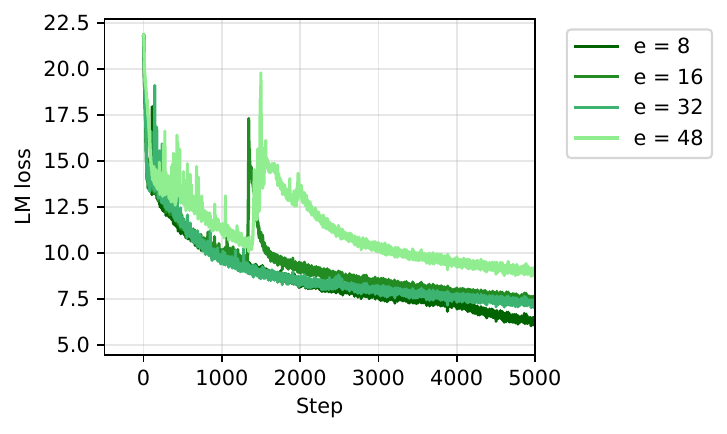}
        \caption{}
        \label{fig:scale:E_loss}
    \end{subfigure}
    
    \caption{Effect of active experts ($k$) on (a) LLM loss and (b) sparsity; effect of number of experts (E) on (c) LLM loss.}
    \label{fig:scale}
\end{figure}


\subsection{Zero-shot accuracy}
We report the result of zero-shot accuracy in Table \ref{tab:zero_shot}. We evaluate our method in seven widely used benchmarks. AR-Challenge (ARC-C) and ARC-Easy (ARC-E): grade-school science multiple-choice with adversarial hard and easier subsets~\citep{Clark2018ARC};
BoolQ: natural yes/no questions from real search queries~\citep{Clark2019BoolQ}; HellaSwag: common sense sentence completion with adversarially filtered decoys~\citep{Zellers2019HellaSwag}; LAMBADA broad context next-word prediction requiring long range discourse understanding~\citep{Paperno2016LAMBADA}; PIQA: physical common sense about everyday actions~\citep{Bisk2020PIQA};
and RACE: reading comprehension from English exams~\citep{Lai2017RACE}. \name\ yields superior or similar results to other methods; on average, we outperform them.  
\begin{table*}[ht]
\caption{\small Zero-shot accuracy (\%) on downstream tasks (higher is better). \(^{\dagger}\)denotes results reproduced by us.}
\label{tab:zero_shot}
\centering
\scalebox{.75}{
\begin{tabular}{l c c c c c c c c}
\toprule
Method & ARC-c & ARC-e & BoolQ & HellaSwag & LAMBADA & PIQA & RACE & Avg. \\
\midrule
Hash~\citep{Roller2021HashLayers}                & 19.28 & 45.45 & 54.95 & 29.68 & 31.44 & 63.06 & 27.66 & 38.79 \\
Lory~\citep{zhong2024lory}                        & 20.31 & 42.97 & 49.54 & 28.75 & 32.35 & 62.24 & 27.75 & 37.70 \\
SparseMixer-v2~\citep{SparseMixer2023}           & 19.80 & \textbf{46.72} & 45.96 & 30.24 & 34.12 & 62.89 & 29.00 & 38.39 \\
EC (Expert Choice)~\citep{Zhou2022EC}             & 18.86 & 42.97 & 60.21 & 29.14 & 29.26 & 61.92 & 27.37 & 38.53 \\
Switch MoE$^{\dagger}$~\citep{fedus2021switch}  & 20.09  & 44.23 & 57.83 & 29.68 & 32.97 & 63.55 & 27.96 & 39.47 \\
ReMoE~\citep{wang2025remoe}                      & 20.22 & 46.68 & 54.16 & \textbf{30.26} & 35.94 & 63.55 & 29.38 & 40.03 \\
\midrule
\rowcolor{cyan!10} DirMoE (ours)                  & \textbf{20.57} & 46.20 & \textbf{61.52} & 29.93 & \textbf{36.44} & \textbf{63.75} & \textbf{29.52} & \textbf{41.13} \\
\bottomrule
\end{tabular}
}
\end{table*}

\subsection{Expert specialization} \label{expert_spec}

\name\ controls routing sparsity without an explicit load-balancing loss by using the sparsity knob ($\lambda$) together with the near-binary expert selection vector. Although this probabilistic allocation can yield modest load asymmetries, it encourages experts to concentrate on distinct token subpopulations. As illustrated in Figure \ref{fig:sp}, our method has better expert specialization across different layers compared to Vanilla MoE. The latter explicitly equalizes contributions, which stabilizes utilization but also homogenizes experts and weakens their semantic focus.

\begin{figure}[htbp]
  \centering
  \begin{subfigure}{\linewidth}
    \centering
    \includegraphics[width=1.01\linewidth]{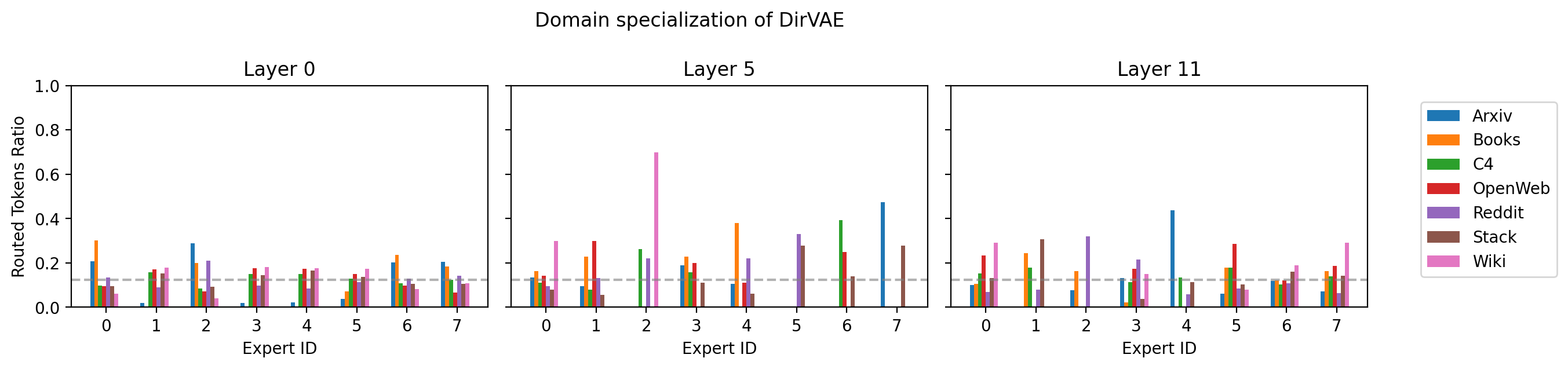} 
    \caption{}
    \label{fig:sp-a}
  \end{subfigure}

  \medskip 

  \begin{subfigure}{\linewidth}
    \centering
    \includegraphics[width=1.01\linewidth]{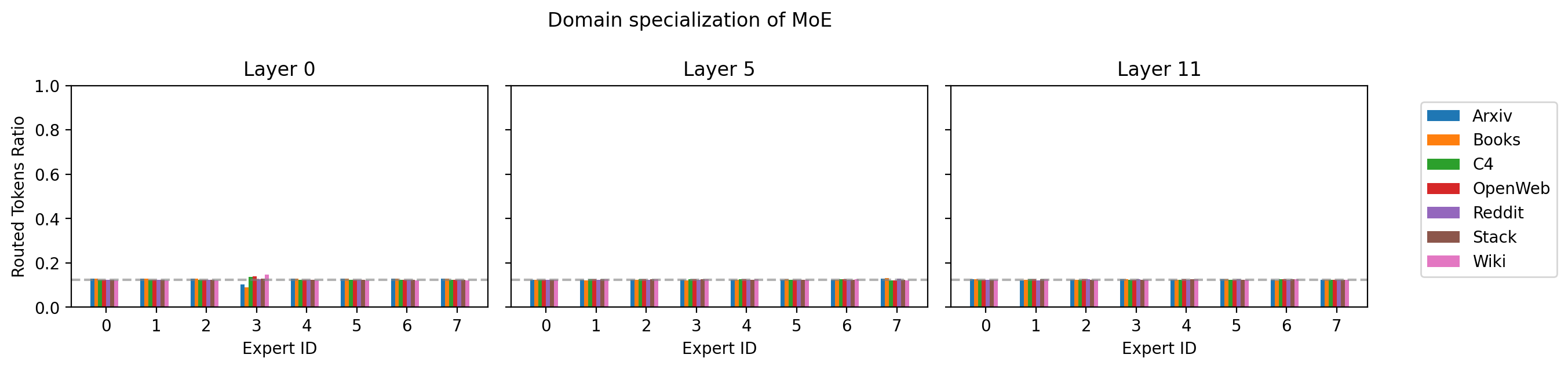}
    \caption{}
    \label{fig:sp-b}
  \end{subfigure}

  \caption{Domain specialization of (a) \name\ and  (b) Vanilla MoE in different layers and domains based on average routed token. The grey dashed line indicates the uniform distribution.}
  \label{fig:sp}
\end{figure}

\section{Conclusion and Future Work}
In this paper, we introduced \name, a fully differentiable probabilistic router for Mixture-of-Experts that disentangles routing into (i) per-token expert selection and (ii) per-token probability allocation over experts (expert contribution). This white-box design gives explicit control over sparsity, enables calibrated concentration via Dirichlet parameters, and maintains end-to-end differentiability. Empirically, \name\ achieves strong few-shot performance, and we ablate its components to study calibration and sparsity control. 
In future works, we envision two promising directions. Robustness: The probabilistic formulation and explicit concentration control may improve model robustness. Interpretability: the disentangled routing variables naturally expose which experts are active and how probability mass is allocated. This could yield clearer, more monosemantic expert specializations.

\newpage
\bibliography{iclr2026_conference}
\bibliographystyle{iclr2026_conference}

\newpage
\appendix
\section{additional details for method and proof}
\subsection{Connection of $\mathcal{R}_{\text{sparsity}}$ to to the spike KL.}\label{appendix:kl-rsparsity}
Let the variational spike be a mean-field Bernoulli
$q(\mathbf z\mid x)=\prod_{i=1}^E \mathrm{Bern}(z_i\mid p_i(x))$ with
$p_i(x)=\sigma(\ell_i(x))$, and consider the 
spike prior $p_\pi(\mathbf z)=\prod_{i=1}^E \mathrm{Bern}(z_i\mid \pi)$ with
$\pi=\tfrac{k}{E}$ (or the fixed-size microcanonical prior
uniform on $\{\mathbf z:\sum_i z_i=k\}$;).
The exact spike KL is
\[
\KL\!\big(q(\mathbf z\mid x)\,\big\|\,p_\pi(\mathbf z)\big)
=\sum_{i=1}^E \Big[
p_i\log \tfrac{p_i}{\pi} + (1-p_i)\log \tfrac{1-p_i}{1-\pi}
\Big]
=\sum_{i=1}^E D_{\mathrm{Ber}}(p_i\Vert \pi).
\]
A second–order expansion of the Bernoulli KL around $p_i=\pi$ gives
\[
D_{\mathrm{Ber}}(p_i\Vert \pi)
=\frac{(p_i-\pi)^2}{2\,\pi(1-\pi)} \;+\; \mathcal O\!\big(|p_i-\pi|^3\big).
\]
Therefore, locally,
\[
\KL\!\big(q\Vert p_\pi\big)
\;\approx\; \frac{1}{2\,\pi(1-\pi)}\sum_{i=1}^E (p_i-\pi)^2.
\]
By Cauchy–Schwarz,
$\sum_i (p_i-\pi)^2 \;\ge\; \tfrac{1}{E}\Big(\sum_i (p_i-\pi)\Big)^2
= \tfrac{1}{E}\big(\sum_i p_i - k\big)^2,$
so we obtain the \emph{local lower bound}
\begin{equation}
  \KL\!\bigl(q \Vert p_\pi\bigr)
  \;\gtrsim\;
  \frac{\bigl(\sum_{i=1}^{E} p_i(x) - k\bigr)^2}{2E\,\pi(1-\pi)}.
  \label{eq:spike-kl-lb}
\end{equation}
Hence the token–local sparsityinality penalty is a smooth surrogate for the spike KL:
\[
\mathcal{R}_{\text{Sparsity}}(x)
=\lambda_{\text{sparsity}}\;\Big(\textstyle\sum_{i=1}^E \tilde z_i(x)-k\Big)^2
\quad\text{with}\quad
\lambda_{\text{sparsity}}\ \approx\ \frac{\beta_z}{2\,E\,\pi(1-\pi)}
\]
matches the local curvature of $\beta_z\,\KL(q\Vert p_\pi)$ when we replace the (intractable in backprop) $\sum_i p_i(x)$ by its low-variance stochastic estimator $\sum_i \tilde z_i(x)$.

\textbf{Remarks.}
 Using $\tilde z_i$ in $\mathcal{R}_{\text{sparsity}}$ gives
$\E\big[(\sum_i \tilde z_i-k)^2\big]
= \Var\!\big(\sum_i \tilde z_i\big) + \big(\sum_i p_i-k\big)^2$,
so the penalty also discourages high-variance gates; as $\tau_z\!\downarrow\!0$, this gap shrinks.
If desired, a deterministic variant
$\mathcal{R}_{\text{Sparsity}}(x)
=\lambda_{\text{sparsity}}\big(\sum_i p_i(x)-k\big)^2$
targets only the bias term.
\subsection{Objective function.}\label{appendix:objective}
The decoder is a single network $g_\psi:\Delta^{E-1}\!\to\!\mathbb{R}^d$ applied to the latent $\mathbf r$:
\[
\hat x(x)=g_\psi\!\big(\mathbf r(x)\big),\qquad
p_\psi\!\big(x\mid g_\psi(\mathbf r)\big)=\mathcal{N}\!\big(g_\psi(\mathbf r),\,\sigma^2 I\big),
\]
so, the reconstruction term reduces to scaled MSE
\[
-\log p_\psi\!\big(x\mid g_\psi(\mathbf r(x))\big)
=
\frac{1}{2\sigma^2}\,\big\|x-g_\psi(\mathbf r(x))\big\|_2^2
+\frac{d}{2}\log(2\pi\sigma^2),
\]
and we drop the constant and often set $\sigma^2{=}1$ in practice.
With the posterior $q_\phi(\boldsymbol\theta\mid x,\tilde{\mathbf z})=\Dir\!\big(\boldsymbol\alpha^{(q)}(x,\tilde{\mathbf z})\big)$ and the slab prior
$p(\boldsymbol\theta \mid \tilde{\mathbf z})=\Dir\!\big(\boldsymbol\alpha^{(p)}(\tilde{\mathbf z}_{\mathrm{sg}})\big)$, the KL term inside \eqref{eq:loss-aux} is
\begin{align}
\KL\!\Big(\Dir\big(\boldsymbol\alpha^{(q)}(x,\tilde{\mathbf z})\big)\;\Big\|\;\Dir\big(\boldsymbol\alpha^{(p)}\big)\Big)
&= \log\Gamma\!\Big(\sum_i \alpha^{(q)}_i\Big) - \sum_i \log\Gamma\!\big(\alpha^{(q)}_i\big)
 - \log\Gamma\!\Big(\sum_i \alpha^{(p)}_i\Big) 
\nonumber\\[-1mm]
+\sum_i \log\Gamma\!\big(\alpha^{(p)}_i\big) &+ \sum_{i=1}^{E} \big(\alpha^{(q)}_i - \alpha^{(p)}_i\big)\Big(\psi\big(\alpha^{(q)}_i\big)-\psi\!\Big(\sum_j \alpha^{(q)}_j\Big)\Big),
\label{eq:kl_dir_current}
\end{align}
with $\psi$ the digamma function and where
\[
\alpha^{(q)}_i(x,\tilde{\mathbf z})
=\lambda^{(q)}\Big(\tilde z_i\,\alpha_{\mathrm{hi},i}(x)+(1-\tilde z_i)\,\alpha_{\mathrm{lo},i}(x)\Big),
\qquad
\alpha^{(p)}_i(\tilde{\mathbf z}_{\mathrm{sg}})
=\lambda^{(p, t)}\Big(\tilde z^{\mathrm{sg}}_i\,\alpha_{\mathrm{hi}}+(1-\tilde z^{\mathrm{sg}}_i)\,\alpha_{\mathrm{lo}}\Big).
\]
\subsection{Expected Simpson index under Dirichlet proof}\label{app:proof_lemma}

Here, we provide proof for Lemma \ref{lem:dirichlet_simpson}:
\begin{proof}[Proof sketch]
For a Dirichlet with parameters $\boldsymbol{\alpha}$, $\Exp[p_i^2]=\tfrac{\alpha_i(\alpha_i+1)}{\alpha_0(\alpha_0+1)}$ with $\alpha_0=\sum_i \alpha_i$. Summing yields:

$\Exp[H]=\tfrac{\sum_i (\alpha_i^2+\alpha_i)}{\alpha_0(\alpha_0+1)}$.
Setting $\alpha_i=\lambda \beta_i$ gives
$\Exp[H]=\tfrac{\lambda^2 S_2 + \lambda B}{(\lambda B)(\lambda B+1)}
= \tfrac{\lambda S_2 + B}{B(\lambda B + 1)}
= \tfrac{\lambda S_2/B + 1}{\lambda B + 1}$.
\end{proof}
\subsection{Monotone sparsity control by concentration proof}\label{app:proof_theory}

Here, we provide proof for Lemma \ref{thm:monotone_lambda}:
\begin{proof}
By Lemma~\ref{lem:dirichlet_simpson},
$f(\lambda)=\dfrac{a\lambda+1}{b\lambda+1}$ with $a=S_2/B$ and $b=B$.
Differentiating gives
$f'(\lambda)=\dfrac{a-b}{(b\lambda+1)^2}$.
Since $\boldsymbol{\beta}>0$ and $E\ge 2$, we have $B^2=\sum_i \beta_i^2 + 2\sum_{i<j}\beta_i\beta_j > \sum_i \beta_i^2=S_2$, hence $a-b=\tfrac{S_2-B^2}{B}<0$, so $f'(\lambda)<0$ for all $\lambda>0$.
The limits follow directly from the rational form:
$f(0^+)=(0+1)/(0+1)=1$ and
$f(\infty)=a/b=(S_2/B)/(B)=S_2/B^2=\sum_i m_i^2$.
\end{proof}

\subsection{Two-Group beta calibrator derivation}\label{app:beta}

Here, we provide details on how to derive Equation \eqref{eq:lambda_from_beta}. Let $S$ be the active set with $|S|=s$ (for our setup $s{=}k$). With the two-level Dirichlet
$\boldsymbol\theta\sim\Dir\!\big(\lambda\,\boldsymbol\beta\big)$, where
$\beta_i=\alpha_{\mathrm{hi}}$ for $i\in S$ and $\beta_i=\alpha_{\mathrm{lo}}$ otherwise,
the total mass on the active group
\[
  T \;:=\; \sum_{i\in S}\theta_i
\]
is Beta-distributed by \eqref{eq:beta_mass}:
\[
  T \sim \Beta\!\big(a,b\big),\qquad
  a=\lambda\,s\,\alpha_{\mathrm{hi}},\quad
  b=\lambda\,(E-s)\,\alpha_{\mathrm{lo}}.
\]
Hence
\[
  \E[T] \;=\; \frac{a}{a+b}
  \;=\; \frac{s\,\alpha_{\mathrm{hi}}}{s\,\alpha_{\mathrm{hi}}+(E-s)\,\alpha_{\mathrm{lo}}}
  \;=:\; m,
\]
and using $\Var(\Beta(a,b))=\dfrac{ab}{(a+b)^2(a+b+1)}$,
\[
  \Var(T)
  \;=\;
  \frac{m(1-m)}{a+b+1}
  \;=\;
  \frac{m(1-m)}{\lambda\,C + 1},
  \qquad
  C:=s\,\alpha_{\mathrm{hi}}+(E-s)\,\alpha_{\mathrm{lo}}.
\]
Solving for $\lambda$ at a target variance $v_{\text{tar}}\in(0,m(1-m))$ yields
\begin{equation}
  \lambda
  \;=\;
  \frac{\dfrac{m(1-m)}{v_{\text{tar}}}-1}{\,s\,\alpha_{\mathrm{hi}} + (E-s)\,\alpha_{\mathrm{lo}}\,}.
\end{equation}

\subsection{Variance and the role of $\lambda$.}\label{var_lmbda}
Let $\alpha_0=\sum_i \alpha^{(p)}_i$. For a Dirichlet,
\[
\Var(\theta_i)=\frac{\alpha^{(p)}_i(\alpha_0-\alpha^{(p)}_i)}{\alpha_0^2(\alpha_0+1)}.
\]
Scaling all concentrations by $\lambda$ scales $\alpha_0$ by $\lambda$, leaves means unchanged, and shrinks variances roughly as $1/(\lambda\,\alpha_0)$. Hence, decreasing $\lambda$ pushes draws toward vertices (sparser allocations) without altering the mean split $m$ induced by the base ratios.
\section{Experiments results}\label{app:exp}
\subsection{Effect of $\lambda_{sparsity}$}\label{app:lmbda_spa}
We further investigate the effect of $R_{sparsity}$ on LLM loss, Simpson index and $\tilde z$ in Figure \ref{fig:lmbda_spa:all}
\begin{figure}[!htbp]
    \centering
    \begin{subfigure}[h]{0.3\textwidth}
        \includegraphics[width=\textwidth]{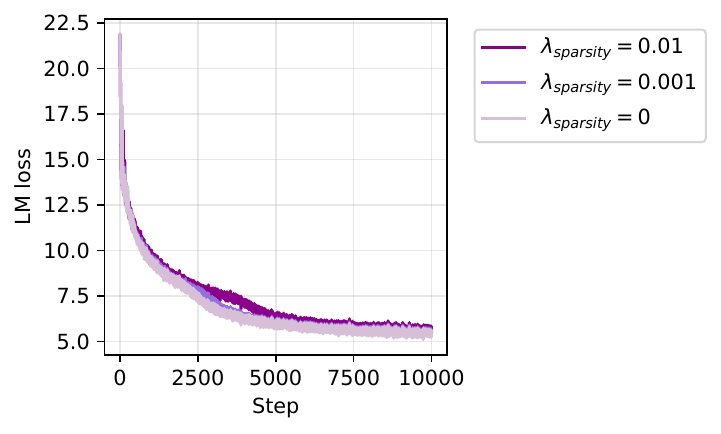}
        \caption{}
        \label{fig:lmbda_spa:lm_loss}
    \end{subfigure}
    \begin{subfigure}[h]{0.3\textwidth}
        \includegraphics[width=\textwidth]{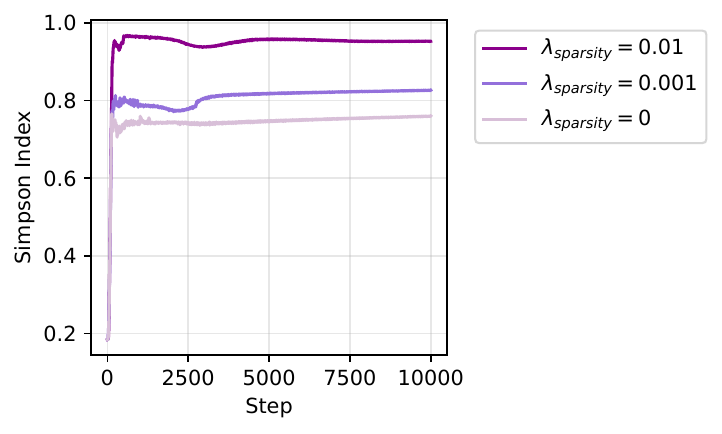}
        \caption{}
        \label{fig:lmbda_spa:simpR}
    \end{subfigure}
    \begin{subfigure}[h]{0.3\textwidth}
        \includegraphics[width=\textwidth]{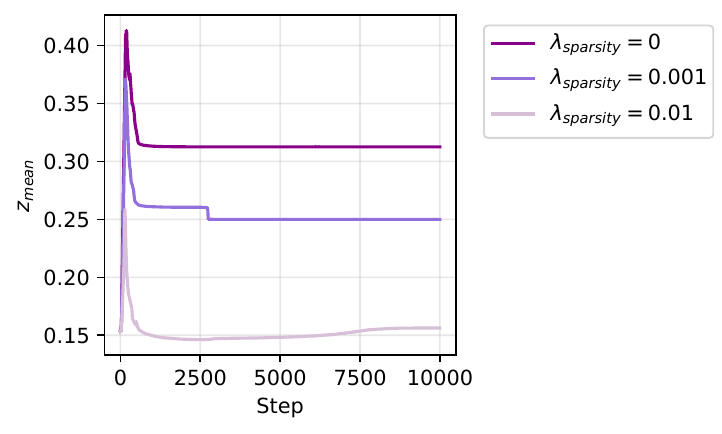}
        \caption{}
        \label{fig:lmbda_spa:z}
    \end{subfigure}
    \begin{subfigure}[h]{0.4\textwidth}
        \includegraphics[width=\textwidth]{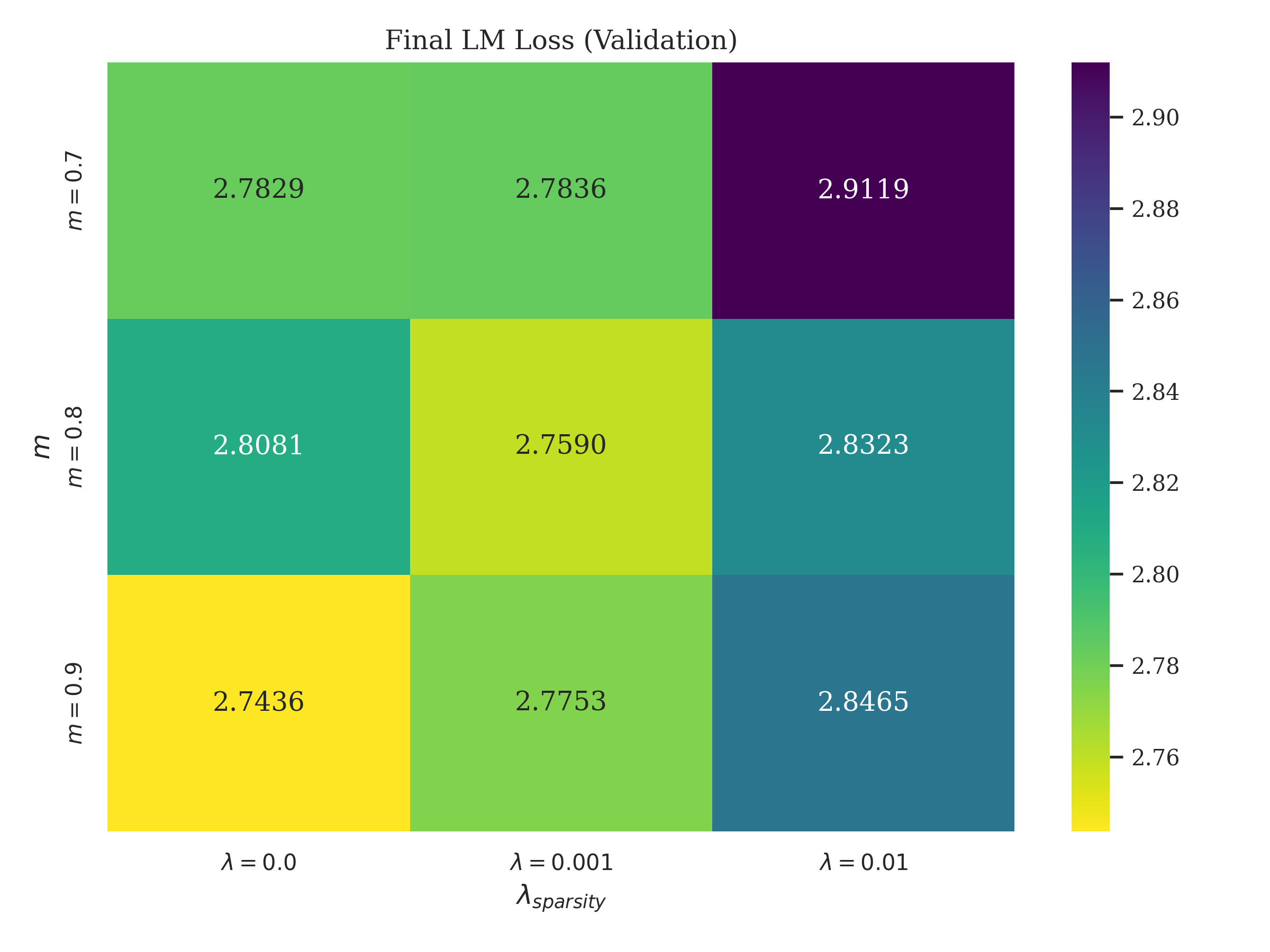}
        \caption{}
        \label{fig:lmbda_heatmap}
    \end{subfigure}
    \caption{Effect of $\lambda_{sparsity}$ on (a) LLM loss and (b) Simpson index, and (c) $\tilde z$ (d) heatmap of Loss.}
    \label{fig:lmbda_spa:all}
\end{figure}

We further investigate the effect of $m$ and $\lambda_{sparsity}$ on LLM loss in  Fig.~\ref{fig:lmbda_spa:all}\subref{fig:lmbda_heatmap}). Intuitively, increasing $m$ concentrates more probability mass on the selected experts. If the sparsity weight  $\lambda_{\text{sparsity}}$ is set to zero, the router can activate more than $k$ experts on difficult tokens. This often lowers the LM loss (more capacity per token) but reduces efficiency because the effective sparsity decreases. In practice, we use a small $\lambda_{\text{sparsity}}>0$  so the model benefits from early accuracy gains while converging back toward the desired $k$ for efficiency.

\subsection{Stability analysis of active experts}\label{app:k_act}
Here, we demonstrate the average number of active experts, the maximum number of active experts, and the gradient norm throughout our training and in Figure \ref{fig:lmbda_k}
\begin{figure}[!htbp]
    \centering
    \begin{subfigure}[h]{0.3\textwidth}
        \includegraphics[width=\textwidth]{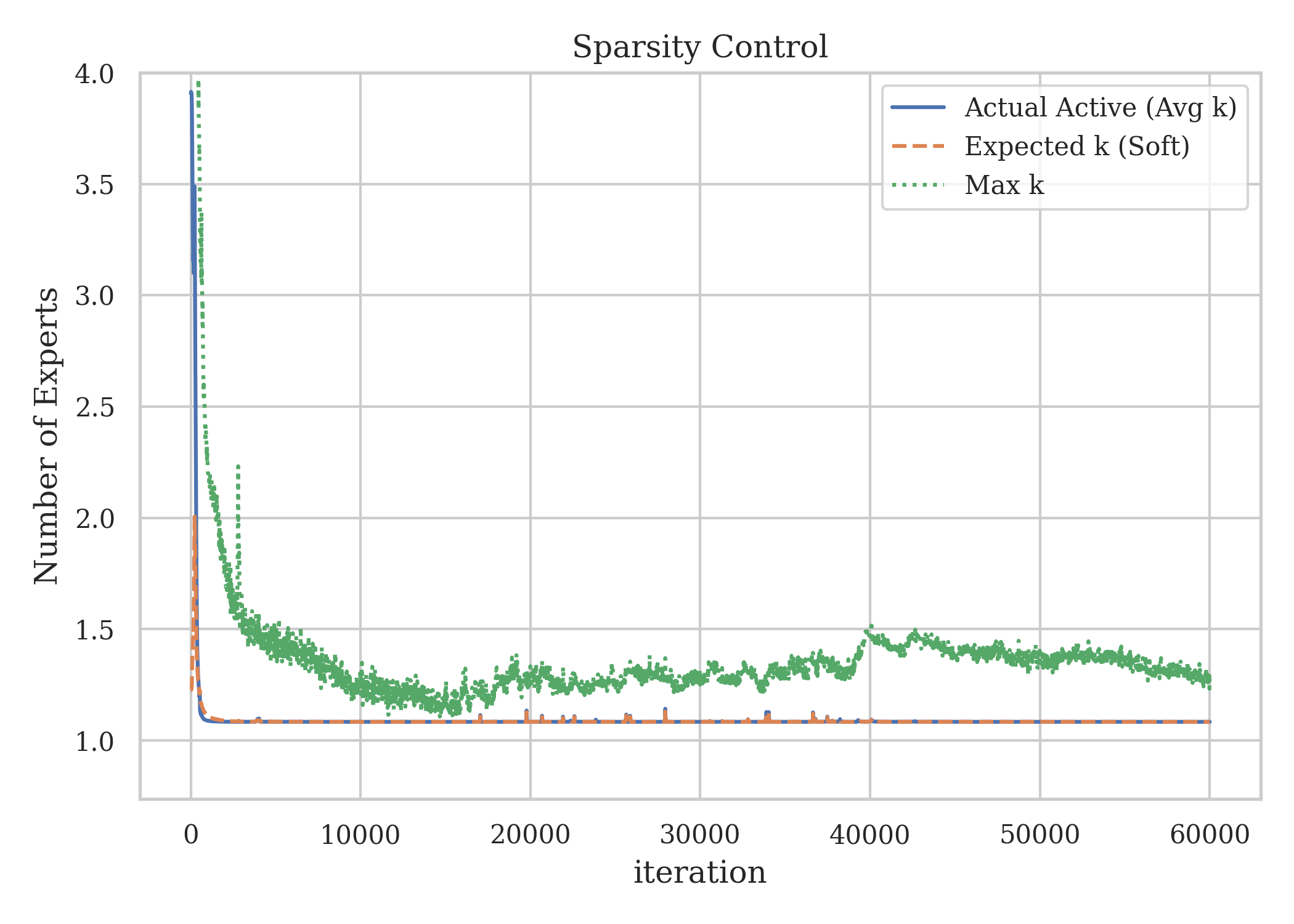}
        \caption{}
        \label{fig:lmbda_spa:lm_loss_2}
    \end{subfigure}
    \begin{subfigure}[h]{0.3\textwidth}
        \includegraphics[width=\textwidth]{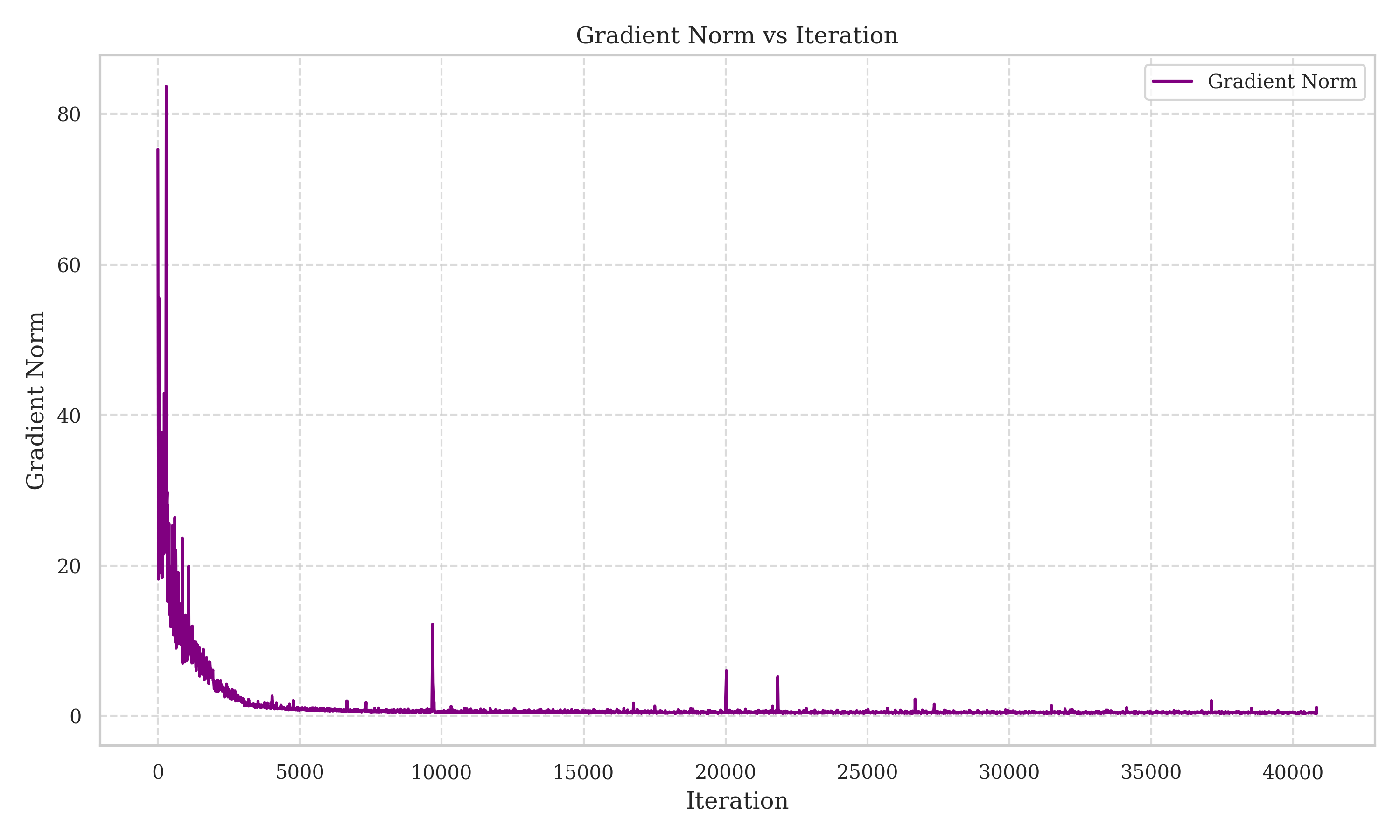}
        \caption{}
        \label{fig:lmbda_spa:simpR_2}
    \end{subfigure}
    \caption{(a) average and maximum number of active experts (b) gradient norm.}
    \label{fig:lmbda_k}
\end{figure}

\subsection{Ablation of VAE loss}\label{app:lmbda_rec}
Here, we further investigate the importance of the reconstruction loss and VAE loss on LLM loss, $\alpha^{q}$ and $\tilde z$ in Figure \ref{fig:lmbda_rec}
\begin{figure}[!htbp]
    \centering
    \begin{subfigure}[h]{0.26\textwidth}
        \includegraphics[width=\textwidth]{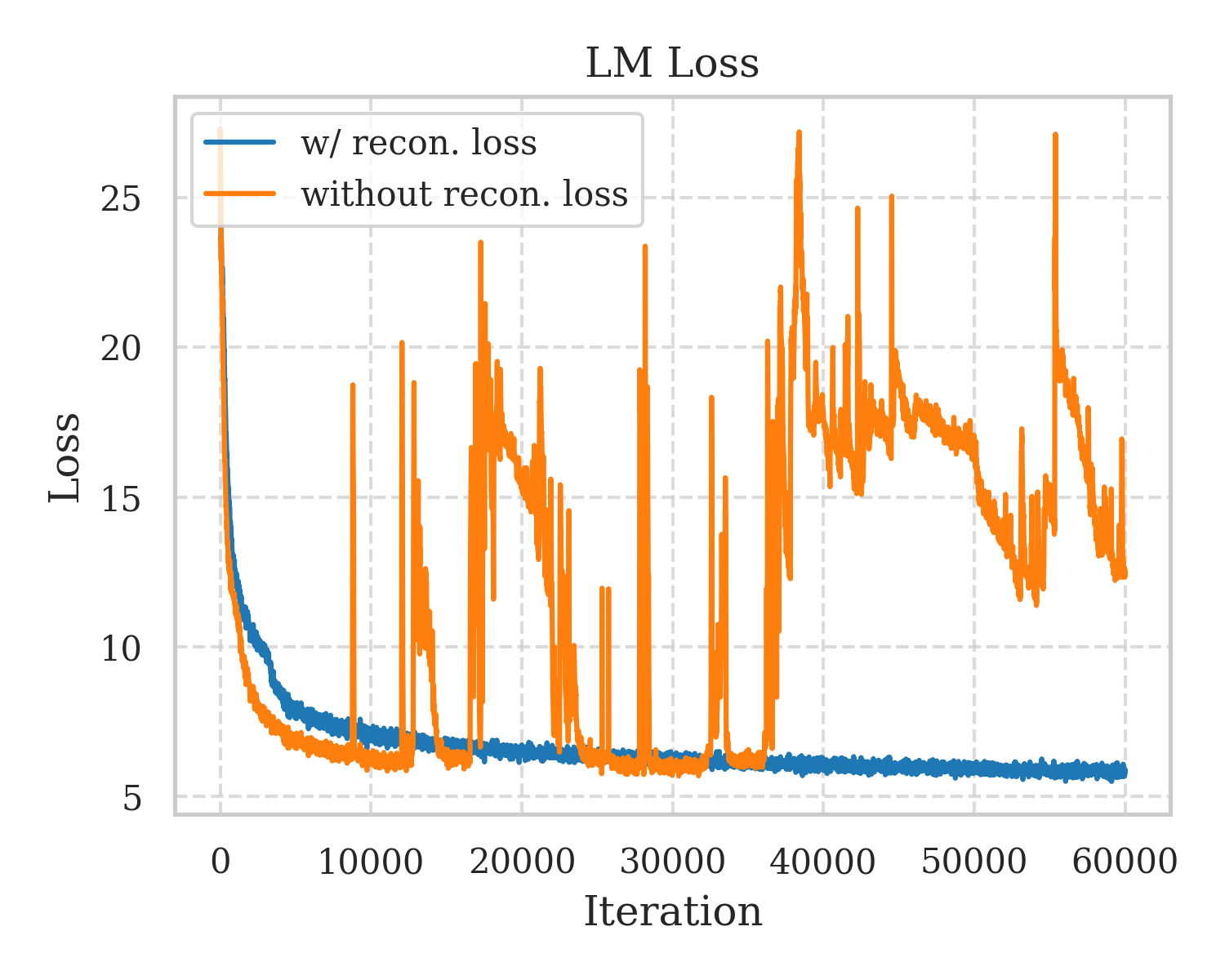}
        \caption{}
        \label{fig:lmbda_spa:lm_loss_3}
    \end{subfigure}
    \begin{subfigure}[h]{0.26\textwidth}
        \includegraphics[width=\textwidth]{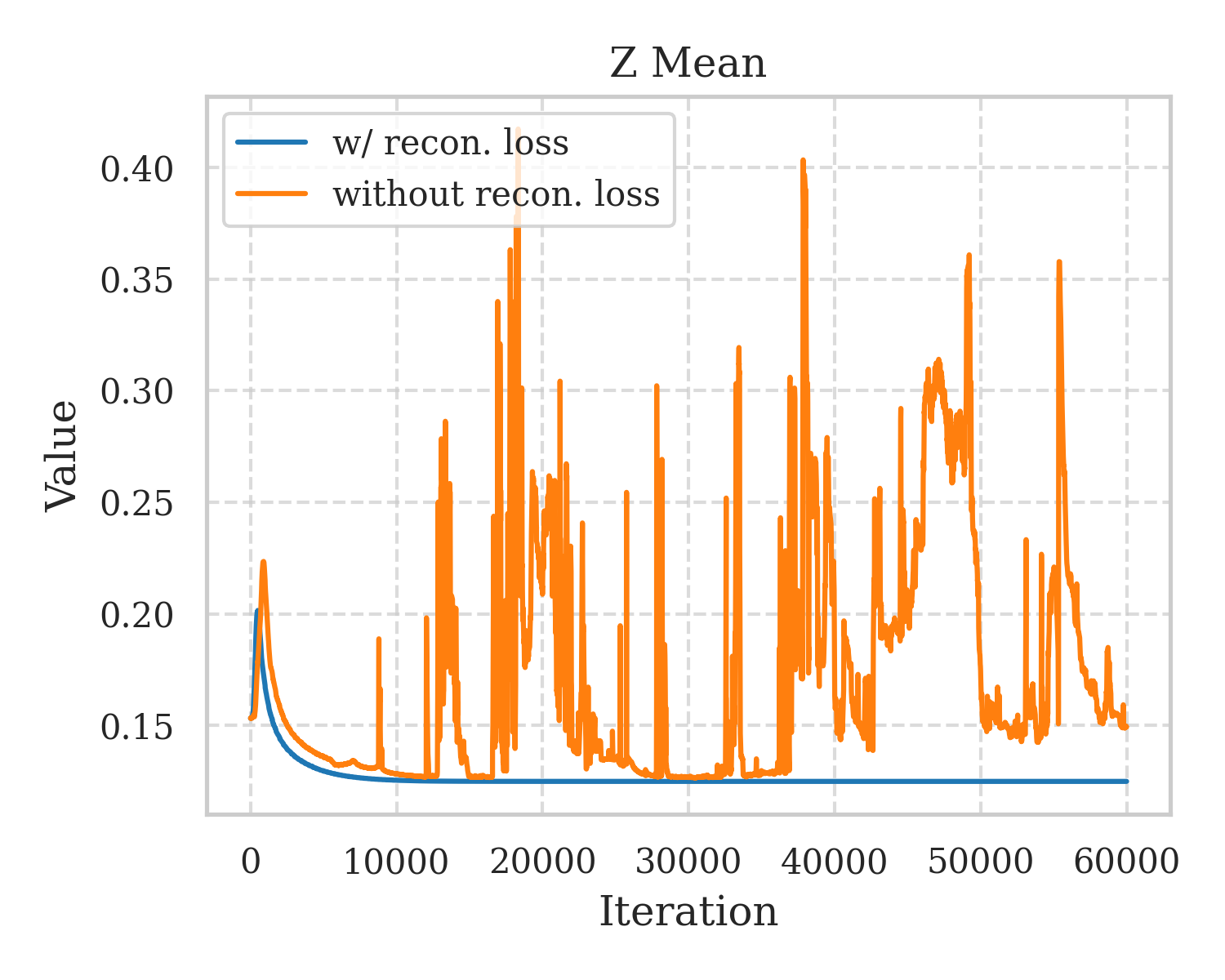}
        \caption{}
        \label{fig:lmbda_spa:simpR_3}
    \end{subfigure}
    \begin{subfigure}[h]{0.34\textwidth}
        \includegraphics[width=\textwidth]{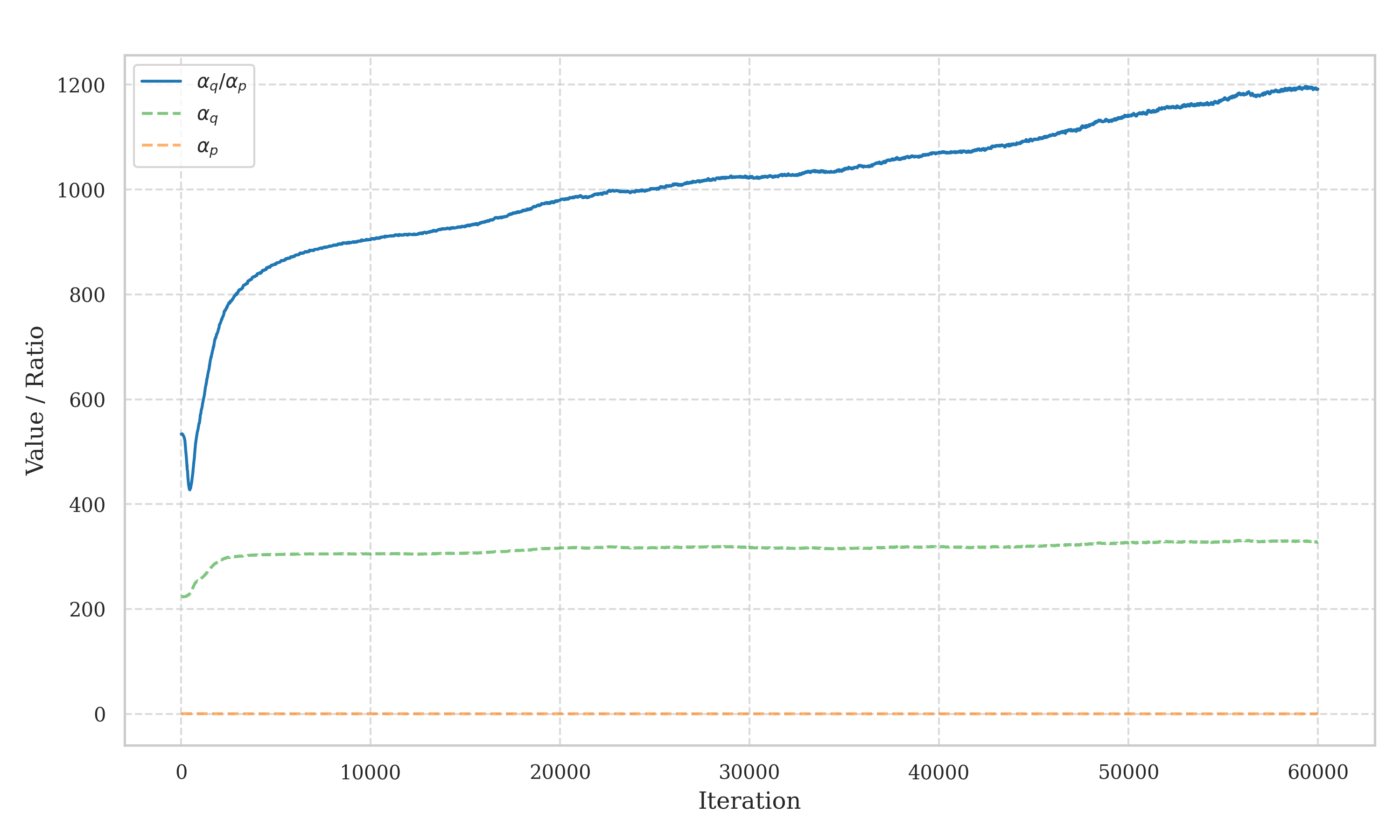}
        \caption{}
        \label{fig:lmbda_spa:z_3}
    \end{subfigure}
    \caption{Effect of reconstruction loss on (a) LLM loss and (b) $\tilde z$, and (c) KL loss effect on $\alpha^{q}$.}
    \label{fig:lmbda_rec}
\end{figure}

`

\subsection{Scalability results}\label{app:scale}
We explored the effect of desired active experts on $\tilde z$ and Simpson index in Figure \ref{fig:scale:all}.

\begin{figure}[!htbp]
    \centering
    \begin{subfigure}[h]{0.4\textwidth}
        \includegraphics[width=\textwidth]{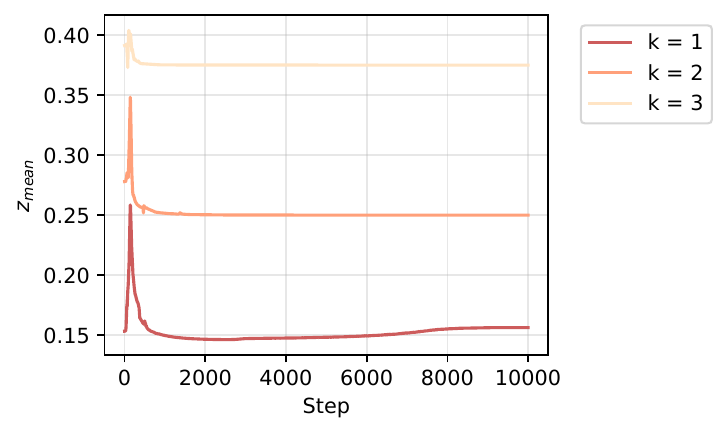}
        \caption{}
        \label{fig:scale:z}
    \end{subfigure}
    \begin{subfigure}[h]{0.4\textwidth}
        \includegraphics[width=\textwidth]{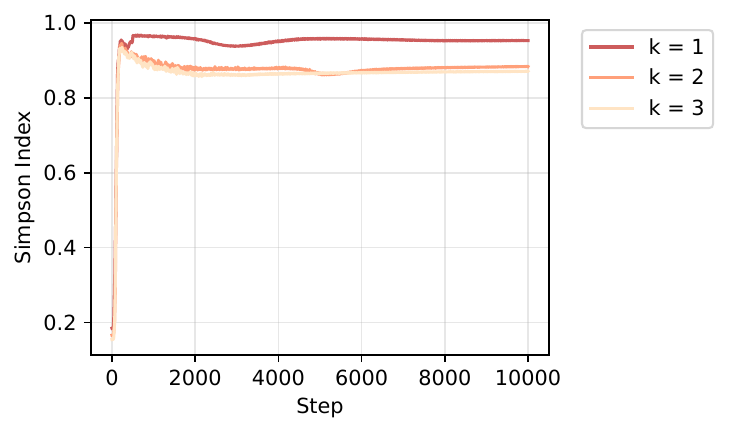}
        \caption{}
        \label{fig:scale:simp}
    \end{subfigure}
    \caption{Effect of active experts ($k$) on (a) $\tilde z$ and (b) Simpson index.}
    \label{fig:scale:all}
\end{figure}

\section{Details on training}
\subsection{Implementation details}\label{app:imp}
\paragraph{Capacity and kernel considerations.}
Near-binary gates and sharp allocations increase per-step variance in expert loads; if the stack drops overflow tokens (capacity-based routing), this can introduce training noise and dead-expert effects. Two practical mitigations are standard: dropless and block-sparse kernels (e.g., MegaBlocks), which eliminate token drops by packing variable expert batches into high-utilization grouped GEMMs~\citep{Gale2023MegaBlocks}, or using a sufficiently high capacity factor (CF) in standard MoE stacks to reduce drops at the cost of extra padding or compute~\citep{rajbhandari2022deepspeedmoe}. Both approaches stabilize optimization while preserving the benefits of sparsity control from $\tau_z$ and $\lambda$. We used the first approach.

\subsection{Practical Implementation Notes}
\label{subsec:notes}


\textbf{Systems.} Our design keeps training fully differentiable and avoids balancing losses; at scale, deploy with dropless kernels~\citep{Gale2023MegaBlocks} or adequate capacity in expert‑parallel engines~\citep{rajbhandari2022deepspeedmoe}.  
\subsection{Hyperparameters}\label{app:hyper}
You can find Hyperparameters for training in Table \ref{table:hyper:opt:dirmoe} and hyperparameters for our router in Table \ref{table:hyper:routing:dirmoe}.

\begin{table}[ht]
\centering
\caption{\textbf{Optimization Hyperparameters }}
\label{table:hyper:opt:dirmoe}
\begin{tabular}{@{}lll@{}}
\toprule
\textbf{Component} & \textbf{Parameter} & \textbf{Default Value} \\ \midrule
\multirow{10}{*}{\textbf{Optimizer}} 
 & Optimizer                      & AdamW \\
 & Learning rate   & $1\times10^{-4}$ \\
 & Weight decay            & $1\times10^{-2}$ \\
 & Adam betas / eps               & $(0.9,\,0.995)$ / $1\times10^{-8}$ \\
 & LR schedule                    & Warmup 1\% steps $\rightarrow$ Cosine (min lr $=0.1\times$ base) \\
 & Global grad clip               & $1.0$ \\
 & Precision                      & bf16 (router logits/Dirichlet in fp32) \\ \midrule
\multirow{3}{*}{\textbf{Batch}} 
 & Micro-batch size               & 32 \\
 & Grad accumulation              & 512 \\
 & Seed                           & 1234 \\ \bottomrule
\end{tabular}
\end{table}

\begin{table}[ht]
\centering
\caption{\textbf{Routing Hyperparameters}}
\label{table:hyper:routing:dirmoe}
\begin{tabular}{@{}lll@{}}
\toprule
\textbf{Component} & \textbf{Parameter} & \textbf{Default Value} \\ \midrule
\multirow{5}{*}{\textbf{Gates (spike)}} 
 & Experts $E$ / target actives $k$             & $E{=}8,\ k{=}1$ \\
 & Gate temperature $\tau_z$ schedule \eqref{eq:tau_sched} & $2.0 \rightarrow 0.3$ (cosine) \\
 & Gate bias init (per logit)                    & $\tau_0 \!\cdot\! \mathrm{logit}(k/E)$ \\
 & Per-token centering                           & On ($\ell \leftarrow \ell - \mathrm{mean}_i(\ell)$) \\
 & $z$-threshold for routing map (metrics)       & $0.125$ \\ \midrule
\multirow{8}{*}{\textbf{Slab (Dirichlet)}} 
 & Posterior scale $\lambda^{(q)}$ (Eq.\ (16))   &  $\lambda^{(q)}=20$ \\
 & Prior mean mass $m$ on actives \eqref{eq:beta_mass} & $m{=}0.9$ \\
 & Ratio $r{=}\alpha_{\mathrm{hi}}/\alpha_{\mathrm{lo}}$ \eqref{eq:ratio_r} & $r{\approx}39.7$ (for $E{=}8,k{=}1,m{=}0.85$) \\
 & Prior $\alpha_{\mathrm{lo}}^{(0)}$ / floor \eqref{eq:dir_sched} & $0.005$  \\
 & Prior $\alpha_{\mathrm{hi}}^{(0)}{=}r\,\alpha_{\mathrm{lo}}^{(0)}$ & $\approx 1.99$ \\
 & Prior scale $\lambda^{(p)}$ schedule \eqref{eq:dir_sched} & $0.5 \rightarrow 0.3$ (slow exp/cosine) \\
 & KL weight $\beta_\theta$                      & $1\times10^{-2}$ \\
 & Recon variance $\sigma^2$ (Sec.~\ref{subsec:objective}) & $1.0$ \\ \midrule
\multirow{4}{*}{\textbf{Router output}} 
 & Leak before normalize                         & $\varepsilon_w{=}1\times10^{-3}$ \\
 & Epsilon clamp                       & Off  \\
 & Expected-$k$ penalty \eqref{eq:sparsity-penalty}  & $\lambda_{\text{sparsity}}: 0.01 $ \\
\end{tabular}
\end{table}

\section{AI usage clarification}
We used AI-assisted tools solely for editing and syntax lookups. All ideas, methods, experiments, analyses, and conclusions were conceived and written by the authors. Every AI suggestion was manually reviewed and edited before applying. No text, figures, tables, code, or data were generated or analyzed autonomously by AI. The authors retain full responsibility for the content.

\end{document}